\newcommand*{\rom}[1]{\expandafter\@slowromancap\romannumeral #1@}
\newcommand{\eol}{\mathsf{EoL}}
\newcommand{\rk}{\mathsf{rk}}
\newcommand{\dd}{\mathsf{dd}}
\newcommand{\head}{\mathsf{head}}
\newcommand{\tcomp}
{\text{$t$-}\mathsf{Comp}}
 \newcommand{\comp}[1]{\text{$#1$-}\mathsf{Comp}}
  \newcommand{\one}[1]{\text{$#1$-}\mathsf{thOne}}
\newtheorem{theorem}{Theorem}
\newtheorem{proposition}{Proposition}
\newtheorem{lemma}{Lemma}
\newtheorem{definition}{Definition}
\newtheorem{corollary}{Corollary}
\title{Ehrenfeucht-Haussler Rank and 
Chain of Thought\footnote{Kozachinskiy is supported by ANID Fondecyt Iniciación grant 11250060. Barceló and Kozachinskiy are funded by  the National Center for Artificial Intelligence CENIA FB210017, Basal
ANID. Barcel\'o is also funded by ANID Millennium Science Initiative Program Code
ICN17002.}}
\author{Pablo Barcel\'o$^{1,2,3}$, Alexander Kozachinskiy$^{1}$, Tomasz Steifer$^{4}$}
\date{%
    $^1$National Center for Artificial Intelligence (CENIA Chile)\\%
 $^2$Millennium Institute for Foundational Research on Data (IMFD Chile)\\%
  $^3$Institute for Mathematical and Computational Engineering, \\ Pontifical Catholic University of Chile\\
    $^4$Institute of Fundamental Technological Research, Polish Academy of Sciences\\%
}
\begin{document}
\maketitle

\begin{abstract}
    The notion of \emph{rank} of a Boolean function has been a cornerstone in PAC learning theory, enabling quasipolynomial-time learning algorithms for polynomial-size decision trees. We present a novel characterization of rank, grounded in the well-known Transformer architecture. We show that the rank of a function $f$ corresponds to the minimum number of \emph{Chain of Thought} (CoT) steps required by a single-layer Transformer with hard attention to compute $f$. Based on this characterization we establish tight bounds on the number of CoT steps required for specific problems, showing that \(\ell\)-fold function composition necessitates exactly \(\ell\) CoT steps. Furthermore, we analyze the problem of identifying the position of the \(k\)-th occurrence of 1 in a Boolean sequence, proving that it requires \(k\) CoT steps. Finally, we introduce the notion of the multi-head rank that captures multi-head single-layer transformers, and perform the analysis of PAC-learnability of the classes of functions with bounded multi-head rank.
    
    \end{abstract}

\section{Introduction}
\label{sec:intro}

\citeauthor{ehrenfeucht1989learning} introduced the notion of the \emph{rank} of a Boolean function and showed that, for any constant $r$, the class of Boolean functions with rank at most $r$ is properly PAC-learnable in polynomial time. As a corollary, they derived their renowned quasipolynomial-time PAC-learning algorithm for polynomial-size decision trees.
\citeauthor{pudlak2000lower} further characterized the rank—not only for Boolean functions but also for Boolean relations—through Prover-Delayer games. Since its introduction, this concept has played a significant role in proof complexity~\cite{kullmann1999investigating,esteban2003combinatorial}.

In this paper, we present a new characterization of the notion of rank. Surprisingly, this characterization is grounded in the \emph{Transformer architecture}~\cite{DBLP:conf/nips/VaswaniSPUJGKP17}, which has recently revolutionized the field of NLP and facilitated the development of LLMs. In essence, we show that the rank of a function 
$f$ corresponds to the minimum number of \emph{Chain of Thought} (CoT) steps required by a single-layer Transformer to compute $f$. 
The Transformers used in our characterization are based on the \emph{hard attention} mechanism—a theoretical abstraction of the \emph{soft attention} mechanism  employed in practice. Hard attention has been widely used in theoretical studies \cite{hahn2020theoretical,hao2022formal,DBLP:conf/iclr/BarceloKLP24,yang2024masked} due to its amenability to formal analysis, while still effectively capturing the essence of practical models \cite{DBLP:conf/blackboxnlp/ClarkKLM19,voita2019analyzing}.

The Transformer architecture is built upon \emph{attention} layers and a \emph{decoder}. An attention layer performs attention on the input sequence, mapping a sequence of input vectors to another sequence of vectors of the same length. Attention layers are used to generate vector representations of sentences in natural language. However, a more common application of Transformers is \emph{sequence generation}, where the input sequence is 
mapped to an unbounded sequence of output vectors, generated iteratively, one at a time. This task is carried out by the decoder.
In the first iteration, the decoder processes the input sequence through the attention layers and outputs the vector in the last position. This output is then appended to the input sequence. During subsequent iterations, the decoder applies its attention layers to the extended sequence, computes the next output, and appends it to the sequence. These are the CoT steps mentioned earlier \cite{DBLP:conf/iclr/MerrillS24,DBLP:conf/iclr/0001LZ024}.

Below we summarize our main results:  
\begin{itemize}
  
\item 
We show that the rank of a function $f$, denoted by $\rk(f)$, is the minimal number of iterations of a single-layer decoder with one hard-attention head that computes $f$.  
We establish our 
result not only for Boolean functions, generalizing the notion of the rank to the non-Boolean case (as far as we know, for the first time).

\item In practice, Transformers are equipped with multiple attention heads, which enhance their computational capabilities. We show that the ability of such Transformers to compute functions can also be characterized using the notion of rank. Specifically, we define the {\em \( H \)-head rank} of a function \( f \), denoted as \( \rk^{(H)}(f) \), for \( H \geq 1 \). We prove that \( \rk^{(H)}(f) \) equals the minimum number of iterations required by a single-layer decoder with \( H \) hard-attention heads to compute \( f \).

\item 
We then explore methods for obtaining tight bounds on the multi-head rank. We begin by observing that \( \rk^{(H)}(f) \) is at most a factor of \( H \) smaller than \( \rk(f) \). While computing \( \rk(f) \) is typically straightforward, it does not always provide an accurate bound for \( \rk^{(H)}(f) \). To address this limitation, we propose a general communication complexity lower bound for \( \rk^{(H)}(f) \). Using this technique, we derive a tight bound on the \( H \)-head rank for the \emph{\( t \)-fold iterated composition}, a function whose complexity has been previously studied for single-layer decoders with soft attention~\cite{DBLP:journals/corr/abs-2402-08164}. The function \( \comp{t} \) takes as input a sequence of \( n \) integers from \( \{1, \ldots, n\} \), interpreted as the values of a function \( \phi\colon\{1, \ldots, n\} \to \{1, \ldots, n\} \). The output of \( \comp{t} \) is the value of \( \phi \), composed with itself \( t \) times, evaluated at \( 1 \).

It is easy to see that $\rk(\comp{t}) \le t$ for any input length $n$. A decoder, establishing this upper bound works by computing $\phi(1)$ in the first iteration, then $\phi(\phi(1))$ in the second iteration, and so on. We prove that this is optimal even if we increase the number of attention heads. Namely, for any $H$, we show that $\rk^{(H)}(\comp{t}) = t$ for all large enough input lengths. 

\item 
We also study the $\one{k}$ function. This function takes as input a Boolean sequence of length $n$, 
and it returns the position of the $k$-th one in it.  It is easy to see that $\rk(\one{k}) \le k$ for any input length. In terms of decoders, in the first iteration we can compute the position of the first one, then of the second one in the second iteration, and so on. We prove that for any $H$ and for large enough $n$, we have $\rk^{(H)}(\one{k}) = k$, showing that even increasing the number of attention heads we cannot improve upon the trivial solution for large enough input lengths. Interestingly, this result cannot be obtained via the communication complexity techniques used for iterated composition. Instead, our proof relies on a purely combinatorial argument. 

\item Finally, motivated by the importance of the notion of rank in the theory of PAC learning, we investigate polynomial-time PAC-learnability of classes of functions with bounded multi-head rank. As we have already mentioned, Ehrenfeucht and Haussler established that for any constant $r$, the class of functions $f$ with $\rk(f) \le r$ is properly polynomial-time PAC learnable. We extend this result to non-binary alphabets. This implies that for any constant $H$ and $r$, the class of functions with $\rk^{(H)}(f)\le r$ is \emph{improperly} polynomial-time PAC learnable as it is a subset of the class of functions with $\rk(f) \le H \cdot r$.

We complement this observation by showing that already the class of functions with \emph{2-head} rank at most 1 is not \emph{properly} polynomial-time PAC-learnable unless NP$\subseteq$BPP. This theorem is adjacent to but incomparable with the classical result of~\cite{pitt1988computational} that 2-term DNFs are not properly polynomial-time PAC learnable unless NP$\subseteq$BPP (2-term DNFs have 2-head rank at most 1, but not all 2-head rank-1 functions are 2-term DNFs).

\end{itemize}

\paragraph{Related work.} 
Numerous studies have sought to explore the expressive power of Transformers by treating them as a computational model and investigating what they can compute~\cite{DBLP:journals/tacl/Hahn20,DBLP:journals/jmlr/PerezBM21,DBLP:journals/tacl/HaoAF22,DBLP:journals/corr/abs-2310-13897,DBLP:conf/icml/0001CP23,DBLP:journals/tacl/MerrillS23,DBLP:conf/iclr/BarceloKLP24,DBLP:conf/iclr/MerrillS24,DBLP:conf/iclr/0001LZ024,DBLP:journals/corr/abs-2404-04393,DBLP:journals/corr/abs-2402-08164}.
In particular, several works have investigated how the capability of decoders depends on the number of iterations. To start with,  \citeauthor{DBLP:journals/jmlr/PerezBM21}  showed that decoders based on hard attention with an unbounded number of iterations are capable of computing any decidable language (with the parameters of the decoder not depending on the input length). 
Afterwards, the computation power of decoders with polynomially many iterations was addressed. \citeauthor{DBLP:conf/iclr/MerrillS24} have shown that in the uniform-regime (when, as in~\cite{DBLP:journals/jmlr/PerezBM21}, parameters do not depend on the input length), such decoders with constant number of layers and softmax attention are capable of computing any polynomial-time language. Similarly, for the non-uniform regime, \cite{DBLP:conf/iclr/0001LZ024} have shown that such decoders are capable of computing any language recognizable by a polynomial-size family of Boolean circuits.

Our result is the first \emph{exact} characterization of the expressive  power of decoders with a given fixed number of iterations, although just for a single layer and for hard attention. 
Recently, \citeauthor{DBLP:journals/corr/abs-2402-08164} have shown that any  single-layer decoder with soft attention requires $\Omega(t)$ iterations to compute  $\comp{t}$ for $t = \sqrt{n/(dHp)}$, where $n$ is the input length, $d$ is the dimension of vectors, $H$ is the number of attention heads, and $p$ is the number of bits of precision. We point out that our results instead do not require any assumptions on the dimension and the number of bits of precision.
 
 \paragraph{Organization of the paper.}
 An introduction to decision trees and the notion of rank is found in Section \ref{sec:dt}, with basic concepts of Transformers being discussed in Section \ref{sec:enc}. The main results about single-head Transformers are presented in Section \ref{sec_equivalence}, with extensions to multi-head Transformers covered in Section \ref{sec:multihead}. Our PAC-learning results can be found in Section \ref{sec_pac}. Final remarks are given in Section \ref{sec:final}. 

 \section{Decision Trees and Rank}
\label{sec:dt} 

Consider $n +1$ finite sets $\Sigma_1, \ldots, \Sigma_n, O$, for $n > 0$. We are interested in decision trees that compute functions: $$f\colon \Sigma_1 \times \Sigma_2 \times\ldots \times \Sigma_n \to O.$$ 
To do this, we consider decision trees over arbitrary families of {\em queries}, where a query is a function $q$ whose domain is $\Sigma_1\times \ldots \times \Sigma_n$. We write $\mathrm{Im}(q)$ for the image of query $Q$. 
If $\mathcal{F}$ is a set of queries, a decision tree over $\mathcal{F}$  is a rooted tree $T$ such that: 
\begin{itemize}
    \item Every non-leaf node $v$ is labeled by some query $q_v \in\mathcal{F}$ and has exactly $|\mathrm{Im}(q_v)|$ out-going edges, each one of them labeled by a different element from $\mathrm{Im}(q_v)$.
\item Every leaf $\ell$ is labeled by some element $o_\ell\in O$.
\end{itemize}
Given an input $\bar w = (\sigma_1, \ldots, \sigma_n) \in \Sigma_1 \times\ldots \times \Sigma_n$, the output of decision tree $T$ on $\bar w$ is computed by descending from the root to one of the leaves. At each intermediate non-leaf node $v$, the tree computes the value $q_v(\bar w)\in\mathrm{Im}(q_v)$ and descends to the unique child of $v$ that is linked to $v$ 
through an edge labeled $q(\bar w)$.  
In this way, we reach some leaf $\ell$, where $T$ outputs the element $o_\ell$ as its result on $\bar w$.
We denote this output as $T(\bar w)$. 

The function $f : \Sigma_1 \times \ldots \times \Sigma_n \to O$ is {\em computed} by $T$, 
if $T(\bar w) = f(\bar w)$ for every input $\bar w \in \Sigma_1 \times \ldots \times \Sigma_n$. 

\paragraph{Boolean case.} Decision trees are often defined for \emph{Boolean} functions, i.e., 
functions of the form $f\colon \{0, 1\}^n \to\{0,1\}$. In our notation, this corresponds to the case 
$\Sigma_1 = \ldots = \Sigma_n = O =\{0, 1\}$. {\em Boolean decision trees} are decision trees over a family $\{p_1, \ldots, p_n\}$ of queries, where for $i = 1, \ldots, n$ the function $p_i\colon \{0,1\}^n\to\{0, 1\}$ is defined as follows on input 
$(b_1,\dots,b_n) \in \{0,1\}^n$:  
\[p_i(b_1,\ldots,b_n) = b_i.\]
That is, at every node, 
 a Boolean decision tree queries the value of some coordinate of the input. 

\citeauthor{ehrenfeucht1989learning} defined the {\em rank} of a Boolean decision tree $T$ 
by inductively defining the rank of its nodes as follows:
\begin{itemize}
    \item the rank of a leaf is $0$, and 
    \item the rank of a non-leaf $v$, whose two children have ranks $r_0, r_1$, is $r =\max\{\min\{r_0, r_1\} + 1, \max\{r_0, r_1\}\}$. 
\end{itemize}
The rank of $T$ is then the rank of its root, and 
the rank of a Boolean function $f : \{0,1\}^n \to \{0,1\}$ is the minimum rank of a Boolean decision tree that computes $f$. 


\paragraph{Rank in the non-boolean case and a-queries.} 
We extend the notion of rank to the non-Boolean case through decision trees over \emph{assignment queries}. 
We start by introducing some terminology. 
Pairs of the form $(i, \sigma)$, where $i\in [n]$ and $\sigma \in \Sigma_i$, are called \emph{assignments}. We denote by $$A = \{1\}\times \Sigma_1\cup \cdots \cup \{n\}\times \Sigma_n$$ 
the set of assignments. An assignment $(i, \sigma)$ is \emph{consistent} with an input $\bar w = (\sigma_1,\ldots,\sigma_n) \in \Sigma_1 \times \ldots \Sigma_n$ if and only if $\sigma_i = \sigma$.
By a permutation of a finite set $B$ we mean a bijection $\tau\colon\{1, \ldots, |B|\}\to B$.

An assignment query ({\em a-query} from now on) 
is a function of the form 
$q_\tau : \Sigma_1 \times \ldots \times \Sigma_n \to A$,  where $\tau$ is a permutation of the set of assignments $A$. For $\bar w\in\Sigma_1 \times \ldots \times \Sigma_n$, we
let $k_{\bar w}$ be the minimal element $k \in 
\{1, \ldots, |A|\}$ such that $\tau(k)$ is consistent with $\bar w$. We then define $q_\tau(\bar w) = \tau(k_{\bar w})$. 

It is sometimes 
convenient to view the computation of an a-query $q_\tau$ on an input $\bar w$ as follows. Assume that $\tau(j) = (i_j,\sigma_j)$, for each $j = 1,\ldots,|A|$. 
Imagine that we do not know $\bar w$, and we start asking a person who knows $\bar w$ questions: ``is the $i_1$-th letter of $\bar w$ equal to $\sigma_1$?'', `is the $i_2$-th letter of $\bar w$ equal to $\sigma_2$?'', and so on. We stop once we receive the first YES answer. If this happens at the $k$th step, we return $q_\tau(\bar w) = (i_k, \sigma_k)$.

We define the rank of an arbitrary function $f : \Sigma_1 \times \ldots \times \Sigma_n \to O$ in terms of the class of decision trees over assignment queries that compute $f$. 

\begin{definition} 
Let $f : \Sigma_1 \times \ldots \times 
\Sigma_n \to O$. We define $\rk(f)$ as the minimal depth of a decision tree over a-queries that computes $f$. \qed
\end{definition} 

As we show below, the notion of rank we have just introduced for arbitrary functions aligns, in the case of Boolean functions, with the definition we previously provided for that class of functions.

\begin{proposition}
    \label{prop_bool_eq}
    For any Boolean function $f : \{0,1\}^n \to \{0,1\}$, its rank, as defined by Ehrenfeucht and Haussler, is equal to $\rk(f)$. 
\end{proposition}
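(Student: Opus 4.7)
I would establish both inequalities separately, each by induction on the relevant complexity measure.

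For the direction $\rk(f) \le r$, where $r$ is the Ehrenfeucht--Haussler rank of $f$, I start with a Boolean (variable-querying) decision tree $T$ of EH-rank $r$ and construct an a-query decision tree of depth $r$ computing the same function. The key combinatorial device is the \emph{heavy path} from the root of $T$: at each internal node whose two children have distinct ranks, follow the child whose rank equals the parent's; stop at the first node $v_{k+1}$ whose two children have equal ranks (the case where the root itself is a leaf yields rank $0$ and is trivial). Writing the heavy path as $v_1, \dots, v_k, v_{k+1}$, where $v_j$ queries $x_{i_j}$ and its heavy child is taken on answer $b_j$, I assign to the root of the new tree the a-query whose permutation begins with
\[
(i_1, \neg b_1),\; (i_2, \neg b_2),\; \ldots,\; (i_k, \neg b_k),\; (i_{k+1}, 0),\; (i_{k+1}, 1),
\]
continuing with the remaining assignments in arbitrary order. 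Since one of the last two listed entries is always consistent with the input, the first consistent assignment is always one of these $k+2$ entries. Each of these outputs leads to a subtree of $T$ of EH-rank at most $r-1$: a \emph{light} child of $v_j$ for output $(i_j, \neg b_j)$, or one of the two (equal-rank-$(r-1)$) children of $v_{k+1}$ for the last two outputs. I recurse on each such subtree via the inductive hypothesis.

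For the converse direction, starting from an a-query tree $T'$ of depth $d$, I simulate the root a-query by a cascade of variable queries that follows its permutation $\tau = ((i_1,\sigma_1), \dots, (i_{2n}, \sigma_{2n}))$: the Boolean tree first queries $x_{i_1}$; on $x_{i_1} = \sigma_1$ it plugs in the inductively built Boolean tree for the subtree of $T'$ corresponding to output $(i_1, \sigma_1)$, while on $x_{i_1} \ne \sigma_1$ it continues with the next position of $\tau$ (skipping positions whose variable has already been queried). By induction each off-cascade subtree has EH-rank at most $d-1$. A bottom-up calculation along the cascade then shows that every node on it has EH-rank at most $d$: the terminal cascade node has both children off-cascade of rank $\le d-1$, and each higher cascade node has one off-cascade child (rank $\le d-1$) and one cascade child below it (rank $\le d$), which by the rank recursion gives rank at most $d$.

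The one subtle point is the end of the heavy path in the forward direction: one must explicitly list both $(i_{k+1}, 0)$ and $(i_{k+1}, 1)$ in the permutation so that the a-query's first consistent output is guaranteed to lie among the first $k+2$ entries, and verify that both children of the rank-increasing node $v_{k+1}$ have rank at most $r-1$ (which is precisely why $v_{k+1}$ is where the heavy path stops). With this taken care of, the induction goes through in both directions.
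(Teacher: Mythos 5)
Your overall architecture matches the paper's: a heavy/"elderly" path to turn one Ehrenfeucht--Haussler rank unit into one a-query, and a cascade of single-variable queries to simulate one a-query at the cost of one unit of rank. The forward direction is essentially correct (the paper walks the elderly path all the way to a leaf and uses arrival at the leaf as the catch-all, whereas you stop at the first equal-rank node and use the pair $(i_{k+1},0),(i_{k+1},1)$; both work). You do silently need the standard normalization that no variable is queried twice on a root-to-leaf path of $T$ — otherwise your listed prefix may repeat an assignment and fail to extend to a permutation — which the paper states explicitly as a WLOG.

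The converse direction, however, contains a step that fails as written: the rule "continue with the next position of $\tau$, skipping positions whose variable has already been queried." Since $\tau$ is a permutation of all $2n$ assignments, a position $(i,\sigma')$ whose variable $i$ was already queried earlier in the cascade is necessarily the \emph{complement} of the assignment that received a NO there, hence it is automatically \emph{consistent} with the input. Such a position is therefore exactly where the a-query's answer is determined, and the cascade must terminate there and descend into the subtree of $T'$ labeled $(i,\sigma')$ — skipping it and querying further variables routes the input into the wrong subtree of $T'$. (Concretely: $\tau=(1,0),(1,1),\dots$ and $x_1=1$; the a-query returns $(1,1)$, but your cascade gets a NO at $(1,0)$, skips $(1,1)$, and keeps going.) The fix is local — replace "skip" by "terminate deterministically into that output's subtree" — and your bottom-up rank calculation survives unchanged, since the terminal cascade node still has both children of rank at most $d-1$. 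The paper sidesteps this entirely by phrasing the simulation in terms of YES/NO questions "is $\tau(j)$ consistent with $\bar w$?": at a repeated variable the question simply receives its (deterministic) YES, which correctly ends the simulation of that a-query at the cost of the one allotted YES answer.
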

\begin{proof}
\emph{(rank $\implies$ a-query depth)}
Assume first that $f : \{0,1\}^n \to \{0,1\}$ can be computed by a Boolean decision tree $T$ of rank $r$. We convert $T$ into a depth-$r$ decision tree $\hat T$ over a-queries that also computes $f$. To do this, we design an inductive strategy based on a-queries such that, for every $t = 0, \ldots, r$ and for every input $\bar w\in\{0, 1\}^n$, the following holds: after asking $t$ a-queries on input $\bar w$, we can compute a node $v_t$ of $T$ or rank at most $r - t$ such that $T$ falls into $v_t$ on $\bar w$ after the first $t$ queries. With this knowledge we can easily build $\hat T$: after $r$ a-queries the strategy gets us to a node $v_r$ or rank 0 to which we arrive by evaluating $\bar w$ on $T$. The node $v_r$ has to be a leaf where the value $f(\bar w)$ is written.

 The condition for $t=0$ is fulfilled with $v_0$ being the root of $T$. It remains to explain how, knowing $v_t$, we can compute $v_{t +1}$ at the cost of a single a-query. We assume that $v_t$ is not a leaf as otherwise we can simply set $v_{t+1} = v_t$. By definition of rank, every non-leaf node has a child of smaller rank. Consider a mapping $\phi$ that, to every non-leaf node $v$ of $T$, it assigns a child of $v$ such that the \emph{other} child of $v$ has smaller rank than $v$. Call $\phi(v)$ the {\em elderly} 
 child of $v$.
 
Set $u_1 = v_t$ and consider a sequence of $u_1, \ldots, u_d$ of nodes of $T$ where $u_{\ell}$ is the elderly child of $u_{\ell - 1}$ for $\ell = 2, \ldots, d$ and $u_d$ is a leaf. For each $\ell = 1,\dots,d$, assume that 
the node $u_\ell$ is labeled with the query $p_{i_\ell}$, for $i_\ell \in \{1,\dots,n\}$, 
i.e., this node asks for the $i_\ell$-th value of the input. Further, let $b_\ell\in\{0, 1\}$ be the label of the edge from $u_\ell$ to $u_{\ell + 1}$ for $\ell = 1, \ldots, d - 1$. Without loss of generality, $i_1, \ldots, i_{d-1}$ are distinct (we may assume that we do not ask the value in the same position twice on the same path, otherwise the number of nodes in the tree can be reduced without increasing its rank).

Define $\tau$ to be any permutation of the set of assignments such that $\tau(1) = (i_1, 1 - b_1), \ldots, \tau(d-1) = (i_{d - 1}, 1 - b_{d - 1})$. We claim that, after getting the value of $q_\tau(\bar w)$, we are able to find a node $v_{t+1}$ whose rank is smaller than $v_t$ such that $T$ goes through $v_{t +1}$ when processing $\bar w$. Namely, $q_\tau(\bar w) = \tau(k)$ for the minimal $k$ such that $\tau(k)$ is consistent with $\bar w$. If $k \le d -1$, this means that assignments $(i_1, 1 - b_1), \ldots, (i_{k -1}, 1 - b_{k - 1})$ are inconsistent with $\bar w$, while $(i_k, 1 - b_k)$ is consistent. Hence,  $\bar w$
has values $b_1, \ldots, b_{k-1}$ at positions $i_1, \ldots, i_{k-1}$, respectively, and $1 - b_k$ at position $i_k$. It means that we descend on $T$ from $v_{t} = u_1$ to $u_k$ while processing input $\bar w$, from where we then move 
to the non-elderly child of $u_k$ that has smaller rank than $u_k$, and, hence, than $u_1 = v_t$. We set $v_{t+1}$ to be the non-elderly child of $u_k$. Now, if $k\ge d$, then when reading $\bar w$ on $T$ we arrive at the leaf $u_{d}$, which we set to be $v_{t+1}$ in this case. 

 \medskip

 \emph{(a-query depth $\implies$ rank)} We show that  a Boolean function $f : \{0,1\}^n \to \{0,1\}$, computable by an $r$-depth decision tree $\widehat{T}$ over a-queries, has rank at most $r$. We first convert $\widehat{T}$ into a so-called \emph{YES-NO} decision tree for $f$. By a YES-NO decision tree we mean a binary rooted tree, where: 
\begin{itemize}
     \item every non-leaf node $v$ is labeled with an assignment $(i_v, \sigma_v)$, and has one out-going edge labeled by YES and the other one 
     by NO; and
     \item every leaf $\ell$ is labeled with a bit $o_\ell\in\{0, 1\}$.
 \end{itemize}
 Given an input $\bar w = (b_1, \ldots, b_n) \in \{0,1\}^n$, the output of decision tree $T$ on $\bar w$ is computed by descending from the root to one of the leaves. At each intermediate non-leaf node $v$, the tree compares the $i_v$-th position of $\bar w$ with $\sigma_v$. If they coincide, we descend through the YES-labeled edge; if they differ, we descend through the NO-labeled edge. 
 Once we arrive to a leaf $\ell$, we output 
 $o_\ell \in \{0,1\}$. We define the {\em YES-depth} of a node $v$ of a YES-NO tree as the maximal number of YES-labeled edges on a path from $v$ to a leaf.

 We start by converting our $r$-depth decision tree $\widehat{T}$ over $a$-queries into a YES-NO decision tree $T$ for $f$ where the root has YES-depth at most $r$.  In other words, we have to give a way of computing $f(\bar w)$ by asking questions of the form ``is the $i$-th position of $\bar w$ equal to $b$?'', for $i\in\{1, \ldots, n\}$ and $b\in\{0, 1\}$, and outputting the answer after at most $r$ answers YES. This can be done by noticing that the value of any a-query can be computed in this model after one YES answer. Indeed, a question ``is the $i$-th position of $\bar w$ equal to $b$?'' is equivalent to a question ``is the assignment $(i,b)$ consistent with $\bar w$?''
 Now, if we want to compute the value $q_\tau(\bar w)$ for a permutation $\tau$ of the set of assignments $\tau$, we start asking questions ``is $\tau(1)$ consistent with $\bar w$?'', ``is $\tau(2)$ consistent with $\bar w$?'', and so on. The first assignment for which we receive a YES is  $q_\tau(\bar w)$.

 We now convert the YES-NO tree $T$ into a Boolean decision tree for $f$ that has rank at most $r$. Namely, for every inner node $v$ that is labeled by an assignment $(i_v, b_v)$, we re-label $v$ by a position $i_v$, and we label the YES-outgoing edge with $b_v$, and the NO-outgoing edge with $1 - b_v$. To finalize, we show by induction on the depth of a node that the rank of any node of $T$ is upper bounded by its YES-depth. 
Any leaf has both YES-depth and rank 0, so the induction base trivially holds. Consider now any node $v$ with two children $v_0, v_1$ whose ranks  $r_0, r_1$ are upper bounded by the YES-depths of $v_0, v_1$, respectively. We establish that the rank $r = \max\{\max\{r_0, r_1\}, \min\{r_0, r_1\} + 1\}$ of $v$ is also upper bounded by its YES-depth. The YES depth of $v$ upper bounds the YES depths of both its children, and hence, upper bounds $\max\{r_0, r_1\}$. At the same time, the YES depth of $v$ is at least $1$ plus the YES-depth of the child to which the YES-edge points from $v$, which is at least $1 + \min\{r_0, r_1\}$.
 \end{proof}

\paragraph{An example: Iterated composition}
We consider the \emph{iterated composition function}. We use a notation $[n] = \{1, \ldots, n\}$ for $n\in\mathbb{N}$. For positive integer numbers $t, n$, 
we define: 
\begin{align*}
\comp{t}_n \colon [n]^n&\to [n],\\
\tcomp_n \colon (f(1), \ldots, f(n)) &\mapsto \underbrace{f(f(\ldots f}_{t \text{ times}}(1))).
\end{align*}
A clarification for the second line: an input to $\tcomp_n$ is an $n$-length word, where every letter is a number from 1 to $n$. This input is interpreted as a function $f\colon [n] \to [n]$, with $f(1)$ being the first letter of the word, $f(2)$ being the second letter of the word, and so on. Sometimes, we also use the following notation:
\[f^{(\ell)} = \underbrace{f\circ f\circ\ldots \circ f}_{\ell \text{ times}}.\]
In particular, we let $f^{(0)}$ be the identity function.

We claim that the rank of $\tcomp_n$ does not exceed $t$. Recall that the input is interpreted as a word $(f(1), \ldots, f(n))$, for some $f\colon [n]\to[n]$, and our task is to compute $f^{(t)}(1)$.
Consider a decision tree that first tries to guess the value of the first letter, that is, of $f(1)$ by going ``is $f(1) = 1$?'', ``is $f(1) = 2$?'', and so on. Once the tree gets it right, receiving the first YES-answer, it already knows $f(1)$, and  now it starts guessing the $f(1)$st letter, that is, $f^{(2)}(1) = f(f(1)))$. It costs the second YES-answer to get it right. Continuing in this way, the tree will find out $f^{(t)}(1)$ after $t$ YES-answers.

By means of a combinatorial argument, it is possible to show that this is the best one can do if $n$ is large enough.

\begin{proposition}
\label{prop_simple_iter_rank}
For any $t$ and for all $n > 2t$, we have $\rk(\tcomp_n) = t$.
\end{proposition}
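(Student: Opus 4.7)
The upper bound $\rk(\tcomp_n) \le t$ was established in the preceding paragraphs, so I focus on the matching lower bound $\rk(\tcomp_n) \ge t$ and prove it via an adversary argument. Given any decision tree $T$ over a-queries of depth $d \le t - 1$, I will produce two inputs $f, g \in [n]^n$ that send $T$ to the same leaf but satisfy $f^{(t)}(1) \ne g^{(t)}(1)$, showing that $T$ cannot compute $\tcomp_n$.

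The adversary maintains a partial function $F\colon D \to [n]$ (initially empty) and answers each a-query $\tau$ by scanning its entries $(i_j, \sigma_j) = \tau(j)$ from $j = 1$ onward. Let $c_\ell$ denote the current \emph{chain tail}, i.e., the first $c_\ell \notin D$ in the sequence $c_0 = 1, c_1 = F(c_0), c_2 = F(c_1), \ldots$ (so $c_0, \ldots, c_{\ell-1} \in D$). At position $j$ the adversary says YES if either (i) $i_j \in D$ with $F(i_j) = \sigma_j$, or (ii) $i_j \notin D$ and the commit is \emph{safe}, meaning $i_j \ne c_\ell$, or $i_j = c_\ell$ with $\sigma_j \notin D \cup \{c_\ell\}$; in case (ii) it also updates $F(i_j) := \sigma_j$. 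Otherwise it says NO and continues, recording $(i_j, \sigma_j)$ in a set $E$ of negative constraints. Safety is designed precisely so that safe commits never close the chain: if $i_j \ne c_\ell$ the tail is unchanged, and if $i_j = c_\ell$ the new tail $\sigma_j$ lies outside the updated $D$. The key invariants after $q$ queries are $|D| \le q$ and $\ell \le q$ (since only safe commits at $c_\ell$ extend the chain), together with $E_i \subseteq D \cup \{i\}$ for every $i$ (NOs touching fresh positions are issued only to the then-current chain tail with a value in $D \cup \{c_\ell\}$, and $D$ grows monotonically).

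After $t - 1$ queries we have $|D| \le t - 1$ and $\ell \le t - 1$, so $c_t$ is still undetermined. Since $n > 2t$, the set $[n] \setminus (D \cup \{c_\ell\})$ has size $\ge n - t > t$, so I can pick pairwise distinct positions $a_1, \ldots, a_{t - \ell - 1}$ inside it. Extending $F$ to a full input by forcing the chain $c_\ell \to a_1 \to a_2 \to \cdots \to a_{t - \ell - 1} \to x$ and filling the remaining coordinates arbitrarily subject to $E$, two distinct choices of $x$ yield extensions $f, g$ with $\tcomp_n(f) \ne \tcomp_n(g)$. The freshness of each $a_i$ together with the invariant $E_i \subseteq D \cup \{i\}$ guarantees that every newly assigned value avoids the exclusions, while the way the adversary answered each query guarantees that both $f$ and $g$ reproduce those exact answers, so both inputs reach the same leaf of $T$ — the desired contradiction.

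The main obstacle is ensuring that the adversary is not trapped into closing the chain prematurely. A naive strategy that commits at the first non-$D$ entry could be forced, already at the first query, to set $F(1) := 1$, turning the chain into a self-loop and fixing $\tcomp_n = 1$ on every extension. The safety test bypasses this at the cost of up to $|D| + 1$ NO answers at the chain tail per query; these NOs respect the invariant $E_i \subseteq D \cup \{i\}$ and therefore never block the fresh positions needed for the final extension, which is ultimately where the hypothesis $n > 2t$ is used.
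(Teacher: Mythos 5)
Your proof is correct and follows essentially the same adversary argument as the paper: answer each a-query by committing a single new value of $f$, chosen so that the iteration chain from $1$ cannot close, and then exhibit two consistent completions with different values of $f^{(t)}(1)$. The paper's bookkeeping differs slightly — it maintains a global set of forbidden \emph{values} (initialized to $\{1\}$) instead of tracking the chain tail with a safety test, and its final witness maps all unfixed positions to a single fresh element $y$, verifying $g^{(t)}(1)=y$ via an in-degree argument rather than by an explicit path extension — but the underlying idea is the same.
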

\begin{proof}
Assume for contradiction that we have a decision tree $T$ of depth $t -1$ over a-queries  for $\tcomp_n$, for some 
$n >2t$. 
We start answering questions for $T$, descending to one of its leafs, in the following manner.
We maintain a set $F\subseteq [n]$ of ``forbidden numbers''. Initially,  $F = \{1\}$. When we receive an a-query with a  permutation $\tau$ of assignments, we select the first assignment $(i,j)$ such that $j\notin F$ and $f(i)$ is not fixed yet. We fix $f(i)=j$ and continue along the tree as if this was the first consistent assignment. After that, we put  $j$ into $F$. Note that after $k$ values of $f$ have been fixed this way, $F$ consists of precisely $k+1$ distinct elements. Indeed, every a-query we consider adds exactly one new element to $F$.

Let $\ell$ denote the leaf of $T$ where we come in this way by answering a-queries. Suppose that $o_\ell\in [n]$ is the value that $T$ outputs in this leaf. We obtain a contradiction by showing that some function $g\colon [n]\to [n]$ with $g^{(t)}(1) \neq o_\ell$ also gets to $\ell$.

Observe that,  
since $T$ is of depth $t - 1$, there are $k\le t -1$ a-queries on the path to $\ell$ and the same number of values of $f$ have been fixed:
\begin{equation}
    \label{eq_yes_answers}
    f(i_1)  = j_1,\,\,f(i_2)  = j_2, \ldots, f(i_k)  = j_k.
\end{equation}
Note that $i_1, \ldots, i_k$ are distinct because we never fix the same value twice. Numbers $j_1, \ldots, j_k$ are distinct too, and they define the evolution of the set $F$. Initially, $F = \{1\}$ after the first a-query, $F = \{1, j_1\}$ after the second a-query, $F = \{1, j_1, j_2\}$ after the third one, and so on.

Take any $y\in [n]\setminus \{1, i_1, \ldots, i_k, j_1, \ldots, j_k, o_\ell\}$ (it exists because $n > 2t\ge 2(k+1)$). Define a function $g\colon [n]\to [n]$ by 
\begin{align*}
    g(i_1) &= j_1,\ldots, g(i_k) = j_k,\\
    g(x) &= y \text{ for } x\in [n]\setminus\{i_1, \ldots, i_k\}.
\end{align*}
We first show that $g$ arrives to $\ell$ in $T$. For that, we show that $g$ is consistent with all answers to questions on the path to $\ell$. All the assignments corresponding to our answers to a-queries on the path to $\ell$ are as in \eqref{eq_yes_answers}, and $g$ is consistent with all of them by definition. Next, take an assignment $(i,j)$ and suppose it appears at the $m$-th a-query along the path to $\ell$, ordered before the assignment $(i_m,j_m)$ (which we chose to be the first consistent one). Hence, in our descent along the tree we ignored this assignment and decided to fix the assignment $(i_m,j_m)$ instead. Hence, we need to observe that $g$ is not consistent with it, that is, that $g(i)\neq j$. Indeed, we could have ignored $(i,j)$ in two cases. Firstly, it could have happened that $g(i)$ was already fixed to some value different to $j$. Secondly, we could have ignored it when $g(i)$ was not yet fixed, because $j$ already belonged to the set of forbidden numbers $F$. But by definition of $g$ that means that either $g(i)=y$ or $g(i)=j_s$ for some $s>m$. The first case is not possible since $y$ was chosen to be outside of $F$, and the second case gives us $g(i)\neq j$.

To finish the proof, we show that $g^{(t)}(1) = y$. Consider a directed graph with vertex set $\{1, \ldots, n\}$, where for every $i\in\{1, \ldots, n\}$ there is a directed edge from $i$ to $g(i)$. The image of the function $g$ consists of $j_1, \ldots, j_k$ and $y$. In the graph, these are the only nodes with incoming edges. Observe that each of $j_1, \ldots, j_k$ has exactly one incoming edge. Namely, for $s = 1, \ldots, k$, the node $j_s$ has a unique incoming edge from $i_s$.
To compute $g^{(t)}(1)$, we start moving from 1 along the edges for $t$ steps. We will be moving over $j_1, \ldots, j_k$ and $y$. Note that $g(y) = y$ because $y\notin\{i_1, \ldots, i_k\}$. Hence, it is enough to show that $y$ is reached from $1$ in \emph{at most} $t$ steps because then we stay at $y$ forever. Now, if we do not reach $y$ within the first $t$ steps, then we travel over $j_1, \ldots, j_k$ for $t$ steps. Since $k\le t - 1$, it means that we come into some of $j_1, \ldots, j_k$ two times, but this would mean that one of them has two distinct incoming edges, which is impossible.
\end{proof}

\paragraph{An example: Position of the $k$-th one.}
We define a function $\one{k}_n\colon\{0, 1\}^n\to [n + 1]$ such that:  
\[\one{k}_n(\sigma_1,\ldots,\sigma_n) =  
\min\left(\{n + 1\}\cup \{i\in [n] : \sigma_1 + \ldots + \sigma_i= k\}\right).
\]
In other words, given $\bar w = (\sigma_1,\ldots,\sigma_n) \in \{0,1\}^n$, the function $\one{k}_n$ returns the position of the $k$-th one in $\bar w$ (counting from the left). If there are fewer than $k$ ones in $\bar w$, we return $n + 1$. We can then show the following by means of a combinatorial argument: 

\begin{proposition}
\label{prop_simple_kone}
For any $n, k$, we have $\rk(\one{k}_n) \le k$, and for $n \ge k^2 + k$, we have  $\rk(\one{k}_n) = k$.
\end{proposition}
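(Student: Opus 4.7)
The plan is to prove the upper bound $\rk(\one{k}_n) \le k$ by an explicit construction and the matching lower bound $\rk(\one{k}_n) \ge k$ by an adversary argument against any decision tree of depth $k-1$.

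For the upper bound, I build a decision tree inductively: at a node at depth $t$ where the first $t$ ones have already been located at positions $p_1 < \ldots < p_t < n$, the a-query's permutation $\tau$ begins with $(p_t+1, 1), (p_t+2, 1), \ldots, (n, 1)$, then $(p_t+1, 0), (p_t+2, 0), \ldots, (n, 0)$, and ends with the remaining assignments in arbitrary order. If the input has a one beyond $p_t$, the first consistent assignment in $\tau$ is $(p_{t+1}, 1)$, and we descend to the child recording $p_{t+1}$; otherwise every $(i,1)$ with $i > p_t$ is inconsistent, the first consistent assignment is $(p_t + 1, 0)$, and we descend to a leaf labeled $n+1$. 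After $k$ levels the tree either knows $p_k$ or has already concluded $\one{k}_n = n+1$, so the depth is at most $k$.

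For the lower bound, I assume a decision tree $T$ of depth $k-1$ computes $\one{k}_n$ and mount an adversary that maintains a partial assignment $C : [n] \to \{0, 1, *\}$. For the $m$-th a-query $\tau_m$, the adversary scans $\tau_m(1), \tau_m(2), \ldots$ while keeping a per-query counter $c_m$ of freshly committed zeros. On reading $(i, \sigma)$: if $i$ is already committed in $C$, the adversary stops and answers when this agrees with $C$, skips when it disagrees; if $i$ is uncommitted and $\sigma = 0$, commit $\sigma_i = 0$ and answer; if $i$ is uncommitted and $\sigma = 1$, commit $\sigma_i = 0$ and continue provided $c_m < k$, but once $c_m$ has reached $k$, instead commit $\sigma_i = 1$ and answer. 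Every skipped assignment is by construction inconsistent with the current $C$, so the chosen answer is indeed a first-consistent assignment for any completion of $C$. Each query adds at most $k$ zero-commitments during its continue steps plus at most one further commitment at the stopping step, so after $k-1$ queries the total number of committed positions is at most $k^2 - 1$ and the number committed to $1$ is at most $k - 1$.

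Since $n \ge k^2 + k$, at least $k + 1$ positions remain uncommitted after the game. Let $w_A$ extend $C$ by setting every uncommitted position to $0$ and $w_B$ extend $C$ by setting every uncommitted position to $1$: $w_A$ has fewer than $k$ ones, so $\one{k}_n(w_A) = n + 1$, whereas $w_B$ has at least $k + 1$ ones, so $\one{k}_n(w_B) \le n$. Both completions agree with $C$, hence produce the same answers to every a-query and reach the same leaf of $T$, contradicting that the leaf emits a single value. The main obstacle in the plan is calibrating the per-query budget: an always-commit-$0$ strategy is ruined by a permutation beginning with many $(i, 1)$'s, and an always-commit-$1$-on-the-first-uncommitted strategy accumulates commitments to $1$ too quickly to leave two consistent completions with distinct $\one{k}_n$ values; the threshold $c_m = k$ is precisely what balances these two failure modes and yields the quadratic bound $n \ge k^2 + k$.
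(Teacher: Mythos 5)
Your proof is correct, and while the upper bound is the same greedy construction as the paper's, your lower bound takes a genuinely different route. The paper's adversary is minimal: at each node it simply takes the first assignment of the permutation that does not contradict earlier fixations, fixes that single position, and descends; this leaves only $m\le k-1$ fixed positions, but then a separate combinatorial step is needed — the fixed positions cut $[n]$ into at most $k$ intervals, one of length at least $k+1$, inside which the $k$-th one can be planted at two different places consistent with all fixations. Your adversary instead does the work up front: it eagerly commits $0$ to up to $k$ leading $(i,1)$-assignments per query so that the reported answer is genuinely first-consistent, paying at most $k+1$ commitments per query (hence $k^2-1$ in total, with at most $k-1$ ones), after which the contradiction is immediate from the all-$0$ and all-$1$ completions of the $\ge k+1$ uncommitted positions. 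Both arguments land on exactly the same threshold $n\ge k^2+k$; yours trades a more elaborate query-answering rule for a trivial final step, while the paper's trades a trivial answering rule for the interval pigeonhole at the end. One small presentational point: your budget analysis silently uses that commitments made while answering later queries never change the consistency status of assignments already skipped or answered in earlier queries — this is true (skipped assignments are made inconsistent and stay so, answered ones are made consistent and stay so), and is worth stating explicitly, as it is what guarantees that both completions follow the same root-to-leaf path.
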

\begin{proof}
    We first establish the upper bound on the rank. We start by computing the position of the first one using one a-query. Namely, we ask an a-query, defined by the permutation that is associated with the following ordering of the set of assignments:
    \[\tau = (1, 1),(2, 1),\ldots,(n, 1),(1, 0), \ldots ,(n, 0).\]
    If there is at least a 1 in the output, this a-query returns an assignment $(i_1, 1)$ with $i_1$ being the position of the first one. If there are no ones in the input, the a-query returns the assignment $(1, 0)$, in which case we can already output $n+1$. Having the position $i_1$ where the first 1 is found, we compute the position of the second 1 asking an a-query defined by the following ordering of the set of assignments:
    \[\tau_{i_1} = (i_1 + 1, 1),\ldots,(n, 1),(1, 0),\ldots,(n, 0),(1, 1),\ldots,(i_1 ,1).\]
    If it returns an assignment $(i_2, 1)$ for $i_2 > i_1$, then the number $i_2$ is the position of the second 1. If it returns an assignment  with value $0$, then after position $i_1$ there are no ones, which already allows us to output $n + 1$.
    Continuing in a similar way, we compute the position of the $k$-th 1 with $k$ a-queries (or find out that there are fewer than $k$ 1s in the input).

    We now establish our lower bound on the rank. Assume for contradiction that for some $n, k, d$ with  $n \ge k^2 + (k -1)$ and $k > d$, there exists a depth-$d$ decision tree $T$ over a-queries that computes $\one{k}_n$. We identify $m \leq d$ positions in $[n]$, along with a specific fixation of Boolean values for these positions, such that all inputs matching these values at those positions arrive at the same leaf $\ell$ in $T$.
     
    Consider the permutation of assignments for the a-query asked at the root. Let $(i_1, \sigma_1)$ be the first assignment in this permutation. We fix the value of the $i_1$-th position to $\sigma_1$, and descend from the root by the $(i_1, \sigma_1)$-labeled edge. We end up in some child of the root. Let $(i_2, \sigma_2)$ be the first assignment in the permutation at this child. We fix the $i_2$-th position to $\sigma_2$ unless it contradicts that first fixation, i.e., unless $i_1 = i_2$ and $\sigma_1\neq \sigma_2$. In the latter case, we take the second assignment in the permutation as $(i_2, \sigma_2)$. Proceeding in this way, we come up with $m  < d$  numbers $i_1, \ldots, i_m\in [n]$ and $m$ values $\sigma_1, \ldots, \sigma_m\in \{0,1\}$,  such that all $\bar w\in \{0, 1\}^n$ satisfying:
    \begin{equation}
        \label{eq_bar_w}
        \bar w_{i_1} = \sigma_1, \ldots, \,\, \bar w_{i_m} = \sigma_m,
    \end{equation}
    come to the same leaf $\ell$ of $T$. 
    
    We obtain our desired contradiction by showing that there are two inputs satisfying \eqref{eq_bar_w} with different values of $\one{k}_n$.
In fact, we have $m\le d\le k - 1$ fixed positions. These positions split the remaining positions into at at most $k$ consecutive intervals. Since $n \ge k^2 + k$, one of these intervals $I$ has length at least $k+1$. To the left of this interval, we have $s$ positions fixed to 1, with $0\le s \le k - 1$. We fix the first $k - 1 - s$ positions of $I$ to $1$, hence before the $k$th position of $I$ there are 
exactly $k - 1$ ones. Both the $k$-th and the $(k+1)$-st positions of $I$ thus can be the value of $\one{k}_n$.
\end{proof} 

\section{Attention Layers and Decoders}
\label{sec:enc} 

\paragraph{{\bf Attention layer.}} 
We consider layers with {\em unique 
hard attention}, and possibly multiple attention heads, where the output of the layer is computed in the last token. 
By unique hard attention we refer to the mechanism in which each position attends to the element with the highest attention score (breaking ties arbitrarily). 

Formally,
a {\em unique hard-attention layer} (or, simply, attention layer) with $H$ heads and embedding dimension $d$ is a function $L \colon(\mathbb{R}^d)^* \to\mathbb{R}^d$,  
which is defined by 
\begin{itemize}
    \item $H$ {\em query} matrices $Q^{(h)} \in\mathbb{R}^{d\times d}$
    and $H$ {\em key}  matrices $K^{(h)} \in\mathbb{R}^{d\times d}$, for $h = 1, \ldots, H$,
\item two matrices $W_1, W_2\in\mathbb{R}^{d\times d}$, and 
\item a matrix $W_O\in \mathbb{R}^{d\times (dH)}$.
\end{itemize}
  Consider an input sequence of vectors $(x_1, \ldots, x_m)\in(\mathbb{R}^{d})^m$. 
The output of $L$ on $(x_1, \ldots, x_m)$ is computed as follows. For every $h = 1, \ldots, H$, we compute the {\em value of the $h$-th head} on $(x_1, \ldots, x_m)$, which is a vector from $\mathbb{R}^d$ denoted by $\head_h\in\mathbb{R}^d$. Namely, we start by computing ``attention scores''
\begin{equation} 
\label{eq_attention}
a_{i,m}^{(h)} = \langle K^{(h)} x_i, Q^{(h)} x_m\rangle,
\end{equation}
defining, for every $i = 1,\ldots,m$, the {\em attention}  from the last token to the $i$-th token with respect to the $h$-th head. The vector $K^{(h)} x_i$ is called the {\em key} of the $i$-th token, and the vector  $Q^{(h)} x_m$ is called the {\em query} of the $m$-th token.

For every $h = 1, \ldots, H$, we let $i_h\in\{1, \ldots, m\}$ to be the index maximizing \eqref{eq_attention}. If there are multiple indices achieving the maximum, we let $i_h$ be the leftmost one. We then set $\mathrm{head}_h = x_{i_h}$, for $h = 1, \ldots, H$, 
and define:
\begin{align}
\label{eq_multihead}
    \mathrm{multihead} = W_O \cdot \begin{pmatrix}\head_1\\ \vdots \\\head_H\end{pmatrix}  \in\mathbb{R}^d
\end{align}
Finally, we define: 
\begin{multline}
\label{eq_relu}
L(x_1, \ldots, x_m) \\ = W_2 \cdot  \mathrm{ReLU}\left(W_1(\mathrm{multihead} + x_m)\right) \in\mathbb{R}^d.
\end{multline}
Recall that $\mathrm{ReLU}(x) = \max{\{0,x\}}$, for every $x \in \mathbb{R}$, and if $x\in \mathbb{R}^d$ then $\mathrm{ReLU}(x)$ is obtained by applying 
$\mathrm{ReLU}$ to each one of its components.


\paragraph{{\bf Decoders.}}
A \emph{decoder}, 
defined by the $d$-dimensional attention layer $L$, is a function that takes on input a sequence of vectors $(x_1, \ldots, x_m)\in(\mathbb{R}^d)^m$ and in the output produces an infinite sequence of vectors $\{y_t\in\mathbb{R}^d\}_{t = 1}^\infty$, defined by:
\begin{align*}
  y_1 &= L(x_1, \ldots, x_m),\\
  y_{t} &= L(x_1, \ldots, x_m, y_1, \ldots, y_{t -1}),\qquad t \ge 2.
\end{align*}
That is, the decoder works in iterations: first, it computes the output of $L$, adds it to the end of the input sequence, computes the output of $L$ on the new sequence, adds this output to the end, and so on. We refer to $y_t$ as the output of the decoder after $t$ iterations (sometimes these iterations are called ``chain of thought steps''). 

\paragraph{{\bf Computation of functions by decoders.}}
Fix $n$ and $n+1$ finite sets $\Sigma_1, \ldots, \Sigma_n, O$. We want to define how a decoder 
computes functions of the form: $$f\colon \Sigma_1 \times\ldots \times \Sigma_n\to O.$$ 
Inputs to $f$ are interpreted as words with $n$ letters, with the $i$-th letter coming from the alphabet $\Sigma_i$, for $i = 1,\ldots, n$ (alphabets are possibly different at different positions). We put this word as an input to a decoder using $n + 1$ tokens, one per letter plus a special token at the end for the ``end of line'' symbol. Input tokens can use arbitrary encodings of letters by $d$-dimensional vectors, potentially different at different positions of the input word,  utilizing in this form a {\em positional} information. We then run the decoder on the resulting input for some number $t$ of iterations. The output of 
$f$ is computed by applying an output function to the decoder's output $y_t$ 
from the final iteration.  


\begin{definition}[Computation of functions by decoders]
\label{def_dec_computes}
    Let $n$ be a natural number and $\Sigma_1, \ldots, \Sigma_n,O$ be $n + 1$ finite sets. 
    A function $f\colon \Sigma_1 \times \ldots\times \Sigma_n \to O$ can be {\em computed by $t$ iterations of a decoder with $H$ heads}, if there exist:
    \begin{itemize}
        \item $d\in\mathbb{N}$ and an attention layer $L$ of embedding dimension $d$ with $H$ heads, 
        \item a \emph{positional encoding} $p$, i.e. a function   $p \colon \Sigma_1\times\{1\} \cup\ldots\cup\Sigma_n\times\{n\} \cup \{\eol\}\to\mathbb{R}^d$, where $\eol$ denotes a special ``end-of-line'' symbol, and 
        \item an {\em output function} $\alpha : \mathbb{R}^d \to O$, 
    \end{itemize}
    such that for any $\bar w = (\sigma_1,\ldots,\sigma_n)\in \Sigma_1 \times\ldots \times \Sigma_n$, the value $f(\bar w)$ is determined by the following procedure:
\begin{enumerate}
\item Define a sequence $(x_1, \ldots, x_{n},y_0)$ 
of $d$-dimensional vectors  by:
    \[x_1 = p(\sigma_1, 1),\,\,\ldots, x_n = p(\sigma_n, n),\,\, y_0 = p(\eol).\]
\item Place $(x_1, \ldots, x_n, y_0)$ as an input to the the decoder defined by $L$, and let $y_t$ for $t \ge 1$ denote the output of this decoder after $t$ iterations.
\item Set $f(\bar w) = \alpha(y_t)$. \qed
\end{enumerate}
\end{definition}

Next, we define the following important notion. 

\begin{definition}[Decoder depth of a function]
The {\em decoder depth with $H$ heads} of  
$f\colon \Sigma_1 \times \ldots\times \Sigma_n \to O$, denoted $\dd^{(H)}(f)$, is the minimum $t \geq 0$ such that 
$f$ can be computed by $t$ iterations of a decoder with $H$ heads. \qed
\end{definition} 

As an illustration of these definitions, we provide a simple 1-head single-layer decoder, computing the $\comp{t}$ function in $t$ iterations.

\begin{proposition}
\label{comp_trivial_upper_bound}
For any positive integers $t, n$, the function $\tcomp_n$ can be computed by $t$ iterations of a decoder layer 
with one head and embedding dimension six.
Hence, $\dd^{(1)}(\tcomp_n) \leq t$. 
\end{proposition}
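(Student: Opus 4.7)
The plan is to directly implement the natural ``follow-the-arrows'' algorithm: at the start of iteration $t$, the last token encodes the integer $j = f^{(t-1)}(1)$; the attention head uses $j$ as a pointer to look up the input token at position $j$, reading off $f(j) = f^{(t)}(1)$; and the layer overwrites the pointer with this new value. The $\eol$ token is encoded as a ``generated'' token already storing the pointer $1 = f^{(0)}(1)$, so the base case $t = 1$ fits the same inductive pattern.

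I take $d = 6$ and set
$$p(\sigma, i) = (i,\ -i^2/2,\ 0,\ \sigma,\ 1,\ 0) \text{ for } i \in [n], \sigma \in [n], \qquad p(\eol) = (1,\ 1,\ 1,\ 0,\ 0,\ 1).$$
The invariant maintained by induction is that $y_t = (j,\ 1,\ j,\ 0,\ 0,\ 1)$ with $j = f^{(t)}(1)$. Coordinates $1$ and $2$ serve the attention inner product, coordinate $3$ stores the pointer that will be read out at the end, coordinate $4$ carries the input letter on input tokens, and coordinates $5$ and $6$ indicate whether the token is an input token or a generated/$\eol$ token.

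I choose $Q$ and $K$ so that the attention score from a last token $(j, 1, j, 0, 0, 1)$ to an input token $(i, -i^2/2, 0, \sigma, 1, 0)$ equals $ij - i^2/2 = -\tfrac12(i-j)^2 + \tfrac{j^2}{2}$, uniquely maximized at $i = j$; and so that the score to any other generated or $\eol$ token has an extra additive term $-N$ produced by letting $K$ pick up the sixth ``generated'' coordinate with weight $-N$. Choosing $N > n^2 + 1$ makes every generated-token score strictly smaller than the best input-token score $j^2/2$, so hard attention always selects $x_j$. Setting $W_O$ to the identity, the multihead output is then $x_j = (j, -j^2/2, 0, f(j), 1, 0)$.

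After adding the residual $y_{t-1}$, the quantities $f(j)$, $1$, $1$ needed to form $y_t$ sit in coordinates $4$, $5$, $6$, and all of them are nonnegative. I take $W_1$ to project onto these three coordinates, so ReLU acts as the identity, and $W_2$ to expand the resulting triple $(f(j), 1, 1)$ into $(f(j), 1, f(j), 0, 0, 1) = y_t$, preserving the invariant. The output function $\alpha(y) = y[3]$ then returns $f^{(t)}(1) = \tcomp_n(\bar w)$ after exactly $t$ iterations. The only remaining work is to write down the four small matrices and verify these linear identities, which is routine bookkeeping and not a real obstacle.
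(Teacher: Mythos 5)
Your construction is correct and follows the same strategy as the paper's proof: maintain the invariant that the $t$-th generated token encodes the pointer $f^{(t)}(1)$, use the single attention head to retrieve the input token at that position, and let the feed-forward part overwrite the pointer with the letter read there. The only differences are low-level implementation choices — you realize the position-matching attention via the quadratic score $ij - i^2/2$ and suppress attention to generated tokens with a large negative penalty on a flag coordinate, whereas the paper uses the trigonometric score $\cos(i-j)$ and gives generated tokens score $0$; likewise you sidestep the ReLU by keeping the relevant coordinates nonnegative, where the paper uses a shift trick. Both gadgets work, so the proofs are essentially equivalent.
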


\begin{proof}We use the following positional encoding:
\[x_i = p(f(i), i) \mapsto \begin{pmatrix}0 \\ \cos i \\ \sin i \\ \cos f(i) \\ \sin  f(i) \\ f(i) \end{pmatrix}, \qquad  y_0 = p(\eol) \mapsto  \begin{pmatrix}0 \\ 0 \\ 0 \\ \cos 1\\ \sin  1 \\ 1 \end{pmatrix}, \]
where $i \in\{1, \ldots, n\}$.
Let us fix the notation: 
\begin{equation}
\label{eq_vl}
y_\ell = \begin{pmatrix}0 \\0 \\ 0 \\ \cos f^{(\ell)}(1)\\ \sin f^{(\ell)}(1) \\  f^{(\ell)}(1) \end{pmatrix}
\end{equation}
for $\ell \ge 0$. Our goal is to devise a decoder 
layer whose output after the $\ell$-th iteration is  $y_\ell$.
Then, 
after $t$ iterations, the sixth coordinate of $y_t$ will be the output of the function.

We first observe that $p(\eol) = y_0$. We need an attention layer $L$ satisfying the following property:
\[L(x_1,\ldots, x_n, y_0, \ldots, y_{\ell}) = y_{\ell +1},\]
for every $\ell \ge 0$.
We set $Q, K\in \mathbb{R}^{6\times 6}$ such that:
\[Q\begin{pmatrix}a_1 \\ a_2 \\ a_3 \\ a_4 \\ a_5\\ a_6 \end{pmatrix} =\begin{pmatrix}a_2 \\ a_3 \\ 0 \\ 0 \\ 0 \\ 0\end{pmatrix}, \qquad K\begin{pmatrix}a_1 \\ a_2 \\ a_3 \\ a_4 \\ a_5\\ a_6\end{pmatrix} =\begin{pmatrix}a_4 \\ a_5 \\ 0 \\ 0 \\ 0\\ 0\end{pmatrix}. \]
We obtain the following attention ``scores'':
\[
\langle Q x_i, K y_\ell\rangle = \cos i \cdot \cos f^{(\ell)}(1) + \sin i  \cdot \sin f^{(\ell)}(1),\qquad
\langle Q y_j, K y_\ell\rangle =0, 
\]
for $i = 1, \ldots, n$ and $j = 0, \ldots, \ell$. The maximum of these expressions is $1$, attained at $i = f^{(\ell)}(1)$. Thus, the value of the (unique) head will be:
\[\head = x_{f^{(\ell)}(1)} = \begin{pmatrix}0 \\ \cos f^{(\ell)}(1) \\ \sin f^{(\ell)}(1) \\ \cos f^{(\ell +1)}(1) \\ \sin  f^{(\ell + 1)}(1) \\ f^{(\ell + 1)}(1) \end{pmatrix}.\]
In \eqref{eq_multihead}, we consider the matrix $W_O\in\mathbb{R}^{6\times 6}$ that moves the 4th, 5th and 6th coordinate to the 1st, 2nd and 3rd coordinate, respectively, and writes 0 to the 4th, 5th and 6th coordinates, yielding:
\[  \mathrm{multihead} = W_O \cdot 
\head = \begin{pmatrix}\cos f^{(\ell +1)}(1) \\ \sin  f^{(\ell + 1)}(1) \\ f^{(\ell + 1)}(1) \\ 0 \\0 \\0 \end{pmatrix} \]
It remains to define the matrices $W_1, W_2\in\mathbb{R}^6$ that define: 
\[W_2 \cdot  \mathrm{ReLU}\left(W_1(\mathrm{multihead} + y_\ell)\right) = y_{\ell+1}.\]
Observe that:
\[\mathrm{multihead} + y_\ell = \begin{pmatrix}\cos f^{(\ell +1)}(1) \\ \sin  f^{(\ell + 1)}(1) \\ f^{(\ell + 1)}(1)\\ \cos f^{(\ell)}(1)\\ \sin f^{(\ell)}(1) \\  f^{(\ell)}(1) \end{pmatrix},\qquad y_{\ell + 1} = \begin{pmatrix}0 \\ 0 \\ 0 \\ \cos f^{(\ell +1)}(1) \\ \sin  f^{(\ell + 1)}(1) \\ f^{(\ell + 1)}(1)\end{pmatrix}.\]
If we did not have the ReLU layer in the middle of \eqref{eq_relu}, we could set $W_1$ as the identity matrix and $W_2$ as the linear transformation that moves the 1st, 2nd, and the 3rd coordinate to the 4th, 5th and 6th, respectively, and places 0 in the first 3 coordinates. However, if we do just this, ReLU can zero the 1st and the 2nd coordinates as they can be negative. To avoid this, we modify $W_1$ to add the third coordinate to the 1st and 2nd and $W_2$ to do the inverse linear transformation before redistributing coordinates as described above.
\end{proof}

\section{One-Head Decoder Depth  vs Tree Rank}
\label{sec_equivalence}

In this section, we show that the rank of a function is equivalent to its decoder depth in the single-head setting.

\begin{theorem}
\label{thm_eq}
For any function $f\colon \Sigma_1 \times \ldots \Sigma_n \to O$, we have $\rk(f) = \dd^{(1)}(f)$.
\end{theorem}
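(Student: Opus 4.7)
The plan is to prove the equality by establishing both directions separately: $\dd^{(1)}(f)\le \rk(f)$ (simulate a decision tree by a decoder) and $\rk(f)\le \dd^{(1)}(f)$ (extract a decision tree from a decoder). The bridge in both directions is a simple observation: one CoT step of a single-head hard-attention decoder in state $y_{s-1}$ ranks all assignments $(i,\sigma)\in A$ by the attention scores $\langle Qy_{s-1},Kx_i^{(\sigma)}\rangle$ and then, over positions $i\in[n]$, selects the top-scoring assignment that is actually consistent with the input $\bar w$. This is precisely the behavior of an a-query whose permutation $\tau$ orders $A$ by decreasing attention score.

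For $\dd^{(1)}(f)\le \rk(f)$, I fix a decision tree $T$ of depth $r=\rk(f)$ over a-queries computing $f$ and build a decoder whose state $y_{s}$ encodes the node of $T$ reached after $s$ a-queries. I take the embedding dimension large enough to allocate (a) a one-hot indicator over $A$ and (b) a dedicated block of coordinates per node of $T$. The positional encoding sets $p(\sigma,i)$ to be (essentially) the one-hot indicator of $(i,\sigma)$. For each node $v$, I choose $Q,K$ so that, whenever $y_{s-1}$ encodes $v$, the score $\langle Qy_{s-1},Kx_i\rangle$ equals $-\tau_v^{-1}(i,\sigma_i)$; the argmax over $i\in[n]$ is then exactly the position whose consistent assignment appears first in $\tau_v$, i.e.\ $q_{\tau_v}(\bar w)$. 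An extra marker coordinate ensures that the $\eol$ and previously generated $y_j$ tokens are penalized and never attended to. The head $x_{i^*}$ reveals the answer $(i^*,\sigma_{i^*})$, and the feedforward layer $W_2\,\mathrm{ReLU}(W_1(\mathrm{multihead}+y_{s-1}))$ implements the finite transition $(v,(i^*,\sigma^*))\mapsto$ child of $v$ on edge $(i^*,\sigma^*)$; this is a piecewise-linear function of finitely many discrete inputs and so a two-layer ReLU network of sufficient width computes it exactly. An output function $\alpha$ reads the leaf label off $y_r$.

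For $\rk(f)\le \dd^{(1)}(f)$, I argue by induction on $s\le t=\dd^{(1)}(f)$ that there is a decision tree $T_s$ over a-queries of depth at most $s$ such that, for every leaf $\ell$ of $T_s$ and every input $\bar w$ whose a-query answers follow the path to $\ell$, the value $y_s(\bar w)$ is determined by $\ell$. Take $T_0$ to be a single leaf labeled by the constant $y_0=p(\eol)$. For the step, fix a leaf $\ell$ of $T_{s-1}$: the vector $y_{s-1}$ is constant on $\ell$, so the attention score $\langle Qy_{s-1},Kx_i^{(\sigma)}\rangle$ depends only on $(i,\sigma)$ and induces a ranking $\tau_\ell$ of $A$ (ties broken by position index, matching the decoder's leftmost rule); the scores $\langle Qy_{s-1},Ky_j\rangle$ for $j<s$ are also fixed at $\ell$. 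If every assignment's score is $\le$ the maximum $y$-token score, the attention outcome is input-independent and no query is attached at $\ell$. Otherwise I attach an a-query with permutation $\tau_\ell$: the answer $(i^*,\sigma^*)$ (the first consistent assignment) together with the fixed $y$-token scores determines which token wins the argmax, hence the head, hence $y_s$. Running the induction to $s=t$ and composing with $\alpha$ at each leaf gives a decision tree of depth at most $t$ computing $f$.

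The main obstacle I expect is the feedforward implementation in the upper-bound direction: packaging an arbitrary finite transition on (node, assignment) pairs into the specific form $W_2\,\mathrm{ReLU}(W_1(\mathrm{multihead}+y_{s-1}))$ with the residual addition. In particular, the residual $\mathrm{multihead}+y_{s-1}$ may produce negative entries that $\mathrm{ReLU}$ would erase, so the node-indicator and score blocks must be shifted to be non-negative (or duplicated with a sign-correction trick as in the proof of Proposition \ref{comp_trivial_upper_bound}). A secondary care point is the tie-breaking alignment between the decoder's leftmost-position rule and the $\tau_\ell$ rule chosen in the lower-bound direction; once assignments sharing an attention score are ordered by position index inside $\tau_\ell$, the correspondence between attention outcomes and first consistent assignments is exact.
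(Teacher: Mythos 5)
Your proposal is correct and follows essentially the same route as the paper: both directions hinge on the observation that one hard-attention step realizes exactly the a-query whose permutation orders assignments by decreasing attention score (ties broken by position index), and your tree-to-decoder simulation uses the same one-hot node state, score-encoding positional encoding, and ReLU-implemented transition as the paper's construction. The only slip is the shortcut ``if every assignment's score is $\le$ the maximum $y$-token score, the outcome is input-independent'': in the equality case a consistent input token would still win by the leftmost rule, so the outcome can depend on the input --- but this is harmless, since you can simply always attach the a-query at that leaf and then compare scores, which is what the paper does.
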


As a corollary to Theorem \ref{thm_eq} and Proposition \ref{prop_simple_iter_rank}, we obtain that for suitable $n$ the decoder depth with one head of the iterated composition function 
$\tcomp_n$ is precisely $t$: 

\begin{corollary} 
For each $t$ and for all $n > 2t$, we have $\dd^{(1)}(\tcomp_n) = t$.
\end{corollary}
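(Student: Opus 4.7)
The plan is to obtain the corollary by chaining two results already established in the paper. Applying Theorem \ref{thm_eq} with $f = \tcomp_n$ yields the equality $\dd^{(1)}(\tcomp_n) = \rk(\tcomp_n)$. Proposition \ref{prop_simple_iter_rank} then supplies $\rk(\tcomp_n) = t$ under the hypothesis $n > 2t$, and composing these two identities immediately gives $\dd^{(1)}(\tcomp_n) = t$. So the proof I would write is essentially a one-line invocation of the two earlier results.

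It is worth noting that the two directions of the desired equality are of very different character. The upper bound $\dd^{(1)}(\tcomp_n) \le t$ has already been established directly and concretely by Proposition \ref{comp_trivial_upper_bound}, which exhibits an explicit six-dimensional one-head attention layer whose $t$-th iteration produces $f^{(t)}(1)$; so this direction does not actually need Theorem \ref{thm_eq} at all. The matching lower bound $\dd^{(1)}(\tcomp_n) \ge t$ is the substantive half, and here we really do need the equivalence: the lower bound is obtained by combining the ``$\rk(f) \le \dd^{(1)}(f)$'' direction of Theorem \ref{thm_eq} with the adversarial construction of Proposition \ref{prop_simple_iter_rank}, which forces any a-query decision tree for $\tcomp_n$ to have depth at least $t$ whenever $n > 2t$.

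There is no genuine obstacle in this corollary, since all the hard work has been pushed into the two invoked results. The combinatorial difficulty lives in Proposition \ref{prop_simple_iter_rank}, whose adversary maintains a set of ``forbidden'' image values to keep sufficiently many candidate functions $g$ alive along the tree's computation path, and the structural difficulty lives in Theorem \ref{thm_eq}, which translates between a-query decision trees and single-head hard-attention decoders. Once both are in hand, the corollary is purely formal.
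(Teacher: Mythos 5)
Your proof is correct and matches the paper's intended derivation: the corollary is stated there as an immediate consequence of Theorem \ref{thm_eq} combined with Proposition \ref{prop_simple_iter_rank}, exactly as you describe. Your additional remarks about which direction carries the substance are accurate but not needed for the argument.
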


Also, as a corollary to Theorem \ref{thm_eq} and Proposition \ref{prop_simple_kone}, we obtain that for suitable $n$ the decoder depth with one head of the $k$th one function 
$\one{k}_n$ is precisely $k$: 

\begin{corollary} 
For each $k$, and for every 
$n \ge k^2 + k$, we have $\dd^{(1)}(\one{k}_n) = k$.
\end{corollary}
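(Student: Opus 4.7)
The plan is to obtain this statement as a one-line consequence of the two results already in hand. First I would apply Theorem~\ref{thm_eq}, which identifies the one-head decoder depth with the a-query rank, reducing the problem to computing $\rk(\one{k}_n)$ for $n \ge k^2 + k$. Then I would invoke Proposition~\ref{prop_simple_kone}, which pins this rank down to exactly $k$ under the same hypothesis $n \ge k^2 + k$. Chaining the two equalities gives
\[
\dd^{(1)}(\one{k}_n) \;=\; \rk(\one{k}_n) \;=\; k.
\]

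There is no genuine obstacle at this step, because all the nontrivial content has already been packaged into the two prior results. The combinatorial work sits inside Proposition~\ref{prop_simple_kone}: the upper bound $\rk(\one{k}_n)\le k$ comes from the explicit family of $k$ a-queries that successively locate the next $1$ (or detect that fewer than $k$ ones exist), while the matching lower bound uses an adversarial trace through a candidate depth-$(k-1)$ tree, which pins down at most $m\le k-1$ positions, leaves an interval of length at least $k+1$ between them (here the bound $n\ge k^2+k$ is used), and exhibits two completions of the input that reach the same leaf but disagree on the position of the $k$th one. The equivalence between rank and single-head decoder depth is supplied by Theorem~\ref{thm_eq}. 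I would therefore simply record the deduction, noting that the threshold $n\ge k^2+k$ transferred verbatim from Proposition~\ref{prop_simple_kone} is exactly what the corollary claims.
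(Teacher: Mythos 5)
Your proposal is correct and matches the paper exactly: the corollary is stated there precisely as the combination of Theorem~\ref{thm_eq} (which gives $\dd^{(1)}(f)=\rk(f)$) with Proposition~\ref{prop_simple_kone} (which gives $\rk(\one{k}_n)=k$ for $n\ge k^2+k$). Nothing further is needed.
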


We now prove our main theorem. 

\begin{proof}[Proof of Theorem \ref{thm_eq}]
We first show the inequality $\rk(f) \le \dd^{(1)}(f)$.
Assume  that $f$ can be computed by a decoder with one head in $r$ iterations, for some $r\in\mathbb{N}$. We deduce that $\rk(f) \le r$. For that, we show that at the cost of $t$ a-queries one can compute the outputs of the decoder in the first $t$ iterations on a given input. Hence, in $r$ a-queries, we can compute the $r$th output of the decoder, which uniquely determines the value of $f$, implying that $\rk(f) \le r$. 


    Consider any input $\bar w = (\sigma_1,\ldots,\sigma_n) \in \Sigma_1 \times \ldots \times \Sigma_n$. Define then: 
    \[x_1 = p(1, \sigma_1),\,\, \ldots, x_n  = p(n, \sigma_n),\,\,y_0 = p(\eol)\in\mathbb{R}^d,\]
    where $d$ is the dimension of our decoder and $p$ is its positional encoding function. Let $\{y_t\in\mathbb{R}^d\}_{t = 1}^\infty$ be the sequence of the outputs of our decoder on input $(x_1, \ldots, x_{n},y_0)$. Assume that we have already computed $y_1, \ldots, y_t$ for some $t\ge 0$ (if $t = 0$, we just know $y_0 = p(\eol)$). We explain how to compute $y_{t+1}$ using one a-query. By definition,  
    \[y_{t+1} = L(x_1, \ldots, x_n, y_0, y_1, \ldots, y_t),\]
    where $L$ is the attention layer defining our decoder. It is enough to compute $s \in\{x_1, \ldots, x_n, y_0, y_1, \ldots, y_t\}$  with the maximal value of $\langle K s, Q y_t\rangle$ for the key and query matrices $K, Q\in\mathbb{R}^{d\times d}$ of our attention layer. If there are multiple vectors $s\in\{x_1, \ldots, x_n, y_0, \ldots, y_t\}$ with the  maximal value of this scalar product, we need to compute the leftmost one among them. Since we already have computed $y_0, y_1, \ldots, y_t$, it suffices to find this maximal $s$ over $\{x_1, \ldots, x_n\} = \{p(1, \sigma_1),\ldots, p(n, \sigma_n)\}$.

    Consider the following linear order of the set $A$ of assignments. Given two different assignments $a = (i,\sigma), a^\prime =(i^\prime,\sigma^\prime)$, we say that $a$ is larger than $a^\prime$ if either $\langle Kp(a), Q y_t \rangle > \langle Kp(a^\prime), Q y_t \rangle $  or $\langle Kp(a), Q y_t \rangle = \langle Kp(a^\prime), Q y_t \rangle$ and $i < i^\prime$. We arbitrarily order assignments with $\langle Kp(a), Q y_t \rangle = \langle Kp(a^\prime), Q y_t \rangle$ and $i = i^\prime$. Our task is to find the maximal assignment from 
    $\{p(1, \sigma_1), \ldots, p(n, \sigma_n)\}$ in this order. For that, we ask the a-query $q_\tau$ for a permutation $\tau$, where the first assignment is the maximal in our linear order, the second one is the second maximal, and so on.

    \medskip

    We now show the inequality $\dd^{(1)}(f) \le \rk(f)$.
    Assume that $T$ is an $r$-depth decision tree over a-queries that computes $f$. We transform into a decoder with one head 
    that computes $f$ in $r$ iterations.
We assume that $T$ is a complete $r$-depth $|A|$-ary tree, where $A$ is the set of assignments.

The embedding dimension of our decoder will be:  
    \begin{align*}
        d &=  1 + |A| + \ldots + |A|^{r - 1}\\
        &+1  + |A| + \ldots + |A|^r \\
        &+ |A|\\
        &+ 1.
    \end{align*}
    The  coordinates will be split into 4 groups:
    \begin{itemize}
        \item the first $ 1 + |A| + \ldots + |A|^{r - 1}$ coordinates are called {\em positional coordinates} and are indexed by non-leaf nodes of $T$;
        \item  the second $ 1 + |A| + \ldots + |A|^{r}$ coordinates are called {\em output coordinates} and are indexed by nodes of $T$;
        \item the third $|A|$ coordinates are called {\em assignment coordinates} and are indexed by assignments;
        \item the last coordinate will be called {\em special}.
    \end{itemize}

    Our goal is to construct a decoder that ``simulates'' $T$ in the following sense.  On input $\bar w\in \Sigma_1 \times \ldots \times\Sigma_n$, for any $t\ge 0$, we want the $t$-th output of the decoder, denoted by $y_t\in\mathbb{R}^d$, to be the one-hot encoding of the node where $T$ comes on $\bar w$ at depth $t$. More specifically, this one-hot encoding will take place in  output coordinates, the remaining coordinates of $y_t$ will all be 0.

To achieve this, we start with 
defining $y_0 = p(\eol)\in\mathbb{R}^d$ as follows. In the restriction  to the output coordinates it is the one-hot encoding of the root of $T$; all the other coordinates of $y_0$ are 0.
Next, we define the positional encoding $p(a)\in\mathbb{R}^d$ for an assignment $a = (i, \sigma)\in A$. In the restriction to the assignment coordinates, it is the one-hot encoding of $a$. Now, for each non-leaf node $v$ of $T$ and its corresponding positional coordinate $p(a)_v$, we set  $p(a)_v= 1/\tau_v^{-1}(a)$, where  $\tau_v\colon\{1, \ldots, |A|\}\to A$ is the permutation defining the a-query asked at $v$. We let the special coordinate of $p(a)$ to be 1.
 Finally, all output coordinates of $p(a)$ are set to $0$.

    Having our positional encoding defined, we move to the construction of the 
 attention layer and define the query matrix $Q\in\mathbb{R}^{d\times d}$  by the following linear transformation $\alpha\mapsto Q\alpha$  for $\alpha\in\mathbb{R}^d$: for every non-leaf node $v$ of $T$, the $v$-th positional coordinate of $Q\alpha$ is equal the $v$-th output coordinate of $\alpha$; the remaining coordinates of $Q\alpha$ are 0. The key matrix $K\in\mathbb{R}^{d\times d}$ is set to be the identity matrix.

    Assume that, as an input, for some $t < r$, we give to a layer the following sequence of vectors:
    \[x_1, \ldots, x_n, \,\, y_0, y_1, \ldots, y_{t}\in\mathbb{R}^d,\]
    where $x_i = p(i, \sigma_i)$ for $i = 1, \ldots, n$ and for some $\bar w = (\sigma_1,\ldots,\sigma_n)\in\Sigma_1\times\ldots \times \Sigma_n$, $y_0 = p(\eol)$, and for every $i = 1, \ldots, t$, the vector $y_i$ is the one-hot encoding, inside the output coordinates, of some depth-$i$ node $v_i$ of $T$, and has 0 in the remaining coordinates. Let $q = q_{v_t}$ be the a-query asked at $v_t$, and let $\tau = \tau_{v_t}$ be the corresponding permutation of the set of assignments (the node $v_t$ is a non-leaf node because $t < r$). We claim that the attention on this input will be maximized for the position with the assignment which is the output of $q$ on $\bar w$. 

    Indeed, the vector $y_t$ has the unique 1 at the $v_t$-th output coordinate, with the remaining  coordinates of $v_t$ being 0. The matrix $Q$ moves this 1 into the $v_t$-th positional coordinate, and the rest of the coordinates of $Q y_t$ are 0. Thus, for any $\alpha\in\mathbb{R}^d$, the product
    $\langle K \alpha,  Q y_t \rangle$ equals the value of $\alpha$ in the $v_t$-th positional  coordinate. If $\alpha = p(i, \sigma_i)$ for $i \in [n]$, this value is $1/\tau^{-1}(i,\sigma_i)$.  The maximum of this value is attained for $(i, \sigma_i)\in\{(1, \sigma_1), \ldots, (n, \sigma_n)\}$ with the minimal value of $\tau^{-1}(i,\sigma_i)$, i.e, for $(i, \sigma_i) = q(\bar w)$.
    Now, for $\alpha \in \{y_0, y_1, \ldots, y_t\}$, the value of the $v_t$-th positional coordinate, as well as any other positional coordinate, is 0. Hence, the output of the head will be the vector $p(q(\bar w))$.

    The output of the layer is now computed as:
\begin{align}
    \label{eq_w1}
    y_{t+1} &=  W_2 \cdot  \mathrm{ReLU}\left(W_1 \cdot \beta\right), \\
        \label{eq_w11}
    \beta &= p(q(\bar w))+y_t.
\end{align}
We need to choose $W_1, W_2\in\mathbb{R}^{d\times d}$ such that the resulting  $y_{t+1}$ will encode the node $v_{t+1}$
where the tree goes from $v_t$ by following 
the $q(\bar w)$-labeled edge. More specifically, we want $y_{t+1}$ to be the one-hot encoding of $v_{t+1}$ in the output coordinates, and we want all the other coordinates of $y_{t+1}$ to be 0. We will set $W_2$ to be the identity matrix.
To define $W_1$, we fix the following notation. For a non-root node $v$ of $T$, let $parent(v)$ denote the parent node of $v$, and let $label(v)\in A$ denote the label of the edge from $parent(v)$ to $v$. We define $W_1$ by the following linear transformation $\alpha\mapsto W_1\alpha, \alpha\in\mathbb{R}^d$: for every non-root node $v$ of $T$, we define the $v$-th output coordinate of $W\alpha$ as
\begin{align}
\label{w1_1}  
    \text{the $parent(v)$-th output coordinate of $\alpha$} \\
    \label{w1_2}
    + \ \ \text{the $label(v)$-th assignment coordinate of $\alpha$}\\
    \label{w1_3}
    - \ \ \text{the special coordinate of $\alpha$.}
\end{align}
We set all the other coordinates of $W_1\alpha$ to 0.

We have to show now that $\mathrm{ReLU}(W_1\cdot\beta)$, with $\beta$ as in (\ref{eq_w1}--\ref{eq_w11}) has 1 in the $v_{t+1}$-th output coordinate and 0 in all the other coordinates. Indeed, $W_1\cdot \beta$ has 0 in any coordinate which is not an output coordinate for a non-root node of $T$. Now, consider any non-root node $v$ of $T$. It is enough to show that the $v$-th output coordinate of $W_1\cdot \beta$ is 1 for $v = v_t$ and is 0 or -1 for $v\neq v_t$ (applying $\mathrm{ReLU}$ to 0 and $-1$, we get 0).

To calculate the $v$-th output coordinate of $W_1\beta$, as stated in (\ref{w1_1}--\ref{w1_3}), we calculate the $parent(v)$-th output coordinate of $\beta$, the $label(v)$-th assignment coordinate of $\beta$, and the special coordinate of $\beta$. Recall that positional encodings of assignments have 0  in the output coordinates. Hence, the sum $\beta = p(q(\bar w)) + y_t$, in the restriction to the output coordinates, is the one-hot encoding of $v_t$. In other words, the $parent(v)$-th output coordinate of $\beta$ is the indicator $\mathbb{I}\{parent(v) = v_t\}$. Likewise, since $y_t$ has only 0 in the non-output coordinates, the sum $\beta = p(q(\bar w)) + y_t$, in the restriction to the assignment coordinates, is the one-hot encoding of the assignment $q(\bar w)$. Again, this means that the $label(v)$-th assignment coordinate of $\beta$ is equal to the indicator $\mathbb{I}\{label(v) = q(\bar w)\}$. Finally, the special coordinates of $p(q(\bar w))$ and $y_t$ are 1 and 0, respectively, meaning that the special coordinate of $\beta$ is 1. Plugging these equalities into  (\ref{w1_1}--\ref{w1_3}) for $\alpha = \beta$, we obtain that the $v$-th output coordinate of $W_1\beta$ equals:
\[\mathbb{I}\{parent(v) = v_t\} + \mathbb{I}\{label(v) = q(\bar w)\} - 1.\]
This expression takes values in $\{-1, 0, 1\}$ and it is equal to 1 if and only if both indicators are 1. It remains to note that $v_{t+1}$ is the only node whose parent is $v_t$ and such that the label of the edge from $v_t$ to this node is $q(\bar w)$.

The $r$-th output of the decoder, $y_r$, in the restriction to the output coordinates, will be the one-hot encoding of the leaf to which we come while computing $T$ on input $\bar w$. Since this leaf uniquely determines $f(\bar w)$, we are done.
\end{proof}
\section{Multihead Rank}
\label{sec:multihead} 

In order to generalize Theorem \ref{thm_eq} to decoders with many heads, we define the notion of $H$-head rank for a function $f : \Sigma_1 \times \ldots \times \Sigma_n \to O$. For that we require a notion of the {\em product} of two functions with the same domain. Namely, by the product of $f\colon A \to B$ and $g\colon A\to C$, we mean a function $(f\otimes g) \colon A \to B \times C$, defined by:
\[(f\otimes g)(a) = (f(a), g(a)).\]
An {\em $H$-degree a-query} is a product of $H$ a-queries.

\begin{definition}
    The $H$-head rank of a function $f : \Sigma_1 \times \ldots \times \Sigma_n \to O$, denoted $\rk^{(H)}(f)$, is the minimal depth of a decision tree over $H$-degree a-queries that computes 
    $f$.
\end{definition}

A simple generalization of the construction of Theorem \ref{thm_eq} allows us to obtain the following result.

\begin{theorem}
\label{thm_hhead}
    For any $H\in\mathbb{N}$ and for any function $f\colon \Sigma_1 \times \ldots \times \Sigma_n \to O$, we have $\rk^{(H)}(f) = \dd^{(H)}(f)$.
\end{theorem}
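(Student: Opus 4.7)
The plan is to mirror both directions of the proof of Theorem \ref{thm_eq}, replacing single a-queries with $H$-degree a-queries and using the extra freedom provided by $H$ independent attention heads.

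\textbf{Direction $\rk^{(H)}(f) \le \dd^{(H)}(f)$.} Suppose $f$ is computed in $r$ iterations by a decoder $L$ with $H$ heads, with query/key matrices $Q^{(1)},K^{(1)},\ldots,Q^{(H)},K^{(H)}$. As in the single-head proof, given $\bar w$ I maintain the outputs $y_0,y_1,\ldots,y_t$ of the decoder by asking one $H$-degree a-query per iteration. For the $(t+1)$st iteration, for each head $h\in\{1,\ldots,H\}$, I define a permutation $\tau_h$ of the assignment set $A$ that orders assignments by decreasing $\langle K^{(h)}p(a),Q^{(h)}y_t\rangle$ with ties broken by position (exactly as done for the unique head in Theorem \ref{thm_eq}). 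The product $q_{\tau_1}\otimes\cdots\otimes q_{\tau_H}$ is an $H$-degree a-query whose output identifies, for every head, the leftmost input position with maximal attention. From these $H$ input positions I can compute all $\head_h$ vectors, and hence $y_{t+1}$, deterministically without further queries. After $r$ iterations, $y_r$ determines $f(\bar w)$, so $\rk^{(H)}(f)\le r$.

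\textbf{Direction $\dd^{(H)}(f) \le \rk^{(H)}(f)$.} Given a depth-$r$ decision tree $T$ over $H$-degree a-queries computing $f$, where each non-leaf node $v$ is labeled by an $H$-tuple of permutations $(\tau_v^{(1)},\ldots,\tau_v^{(H)})$ and has $|A|^H$ children (one per $H$-tuple of assignments), I build a one-layer decoder with $H$ heads that tracks the current node of $T$ in its output. I use the same four groups of coordinates as in Theorem \ref{thm_eq} with two modifications: the positional coordinates become indexed by \emph{pairs} $(v,h)$ of a non-leaf node $v$ of $T$ and a head index $h$, with $p(a)_{(v,h)}=1/(\tau_v^{(h)})^{-1}(a)$; and the assignment coordinates split into $H$ identical blocks, one per head. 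The $h$-th query matrix $Q^{(h)}$ sends the $v$-th output coordinate to the $(v,h)$-th positional coordinate; each $K^{(h)}$ is the identity. Then, just as in the single-head argument, when the decoder state encodes $v_t$, the $h$-th head attends precisely to the input position $q_{\tau_{v_t}^{(h)}}(\bar w)$, so $\head_h=p(q_{\tau_{v_t}^{(h)}}(\bar w))$. The output matrix $W_O$ is chosen to copy the single assignment block of $\head_h$ into the $h$-th assignment block of $\mathrm{multihead}$, so that $\mathrm{multihead}$ encodes the full $H$-tuple of assignments chosen at $v_t$.

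\textbf{Final linear layer.} It remains to design $W_1,W_2$ so that the ReLU sublayer selects the correct child $v_{t+1}$. For each non-root node $v$ of $T$, whose edge from $parent(v)$ is labeled by the tuple $(a_v^{(1)},\ldots,a_v^{(H)})\in A^H$, I define the $v$-th output coordinate of $W_1(\mathrm{multihead}+y_t)$ as the sum
\[
\mathbb{I}\{parent(v)=v_t\}+\sum_{h=1}^{H}\mathbb{I}\{a_v^{(h)}\text{ appears in block }h\}-H\cdot(\text{special coordinate}),
\]
exactly as in Theorem \ref{thm_eq} but with the $H$ per-head label-match terms and a matching constant $H$ subtracted from the rescaled special coordinate. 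This sum is bounded above by $1$ and equals $1$ iff all $H+1$ indicators hold, i.e.\ iff $v$ is the unique child of $v_t$ along the edge labeled $(q^{(1)}(\bar w),\ldots,q^{(H)}(\bar w))$; after ReLU and $W_2=I$, the state $y_{t+1}$ is the one-hot encoding of $v_{t+1}$. Iterating $r$ times lands on a leaf that determines $f(\bar w)$.

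The main potential obstacle is purely bookkeeping: ensuring that $W_O$ cleanly disassembles the $H$-head concatenation into $H$ distinct assignment blocks (so that information from different heads does not collide) and that the special-coordinate rescaling correctly penalizes mismatches from each head. Everything else is a direct lift of the single-head construction.
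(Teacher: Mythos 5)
Your proposal is correct and matches the paper's own proof essentially step for step: the first direction composes the $H$ per-head a-queries into one $H$-degree a-query exactly as the paper does, and the second direction uses the same $H$-fold duplication of positional and assignment coordinates, the same per-head query matrices, the same role for $W_O$, and the same ReLU-conjunction of $H+1$ indicators. (Your threshold constant $H$ for the conjunction of $H+1$ bits is the right one; the paper writes $H-1$ there, which appears to be an off-by-one slip.)
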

\begin{proof}
    We first show that $\rk^{(H)}(f) \le  \dd^{(H)}(f)$. The proof for the case $H = 1$ works almost verbatim for the general case. As we have shown in the proof of Theorem \ref{thm_eq}, for a given decoder with 1 head, knowing the first $t$ outputs on an input $\bar w\in \Sigma_1\times \ldots \times \Sigma_n$, we can compute the value of the head (which would give us the  $(t+1)$-st output), asking one a-query about $\bar w$. For $H$-head decoders, we simply compose $H$ a-queries for each of $H$ heads into a  single $H$-degree a-query.

    We now establish the inequality $\dd^{(H)}(f)\le \rk^{(H)}(f)$. Assume that $T$ is an $r$-depth decision tree over $H$-degree a-queries, computing $f$. In the construction of Theorem \ref{thm_eq},
    we need to multiply the number of positional and assignment coordinates by $H$. Positional coordinates are now indexed by pairs $(v, i)$, where $v$ is a non-leaf node of $T$ and $i\in [H]$ (with $i$ referring to one of the $H$ a-queries, asked at $v$). Likewise, assignment coordinates are now indexed by pairs $(a, i)$, where $a$ is an assignment and $i\in [H]$.

    The positional encoding of the assignment $a$ is modified as follows. As before, output coordinates of $p(a)$ are 0 and the special coordinate of $p(a)$ is 1. Next, $p(a)$ has 1 in the assignment coordinate, indexed by $(a, 1)$, and 0 in the remaining assignment coordinates. Finally, for a  non-leaf node $v$ of $T$ and $i\in [H]$, and for the corresponding positional coordinate $p(a)_{v, i}$, we set $p(a)_{v, i} = 1/\tau_{v,i}^{-1}(a)$, where $\tau_{v,i}$ is the permutation for the $i$-th a-query, asked at the node $v$. With this, we have that the closer $a$ to being the first position in the permutation $\tau_{v,i}$, the higher the value of $p(a)_{v,i}$ is.

    As before, our goal is to maintain that $y_t$, the $t$-th output of the decoder on input $\bar w\in \Sigma_1\times \ldots\times \Sigma_n$, encodes the node $v_t$, which is the $t$-depth node of $T$ where this tree comes on input $\bar w$. More specifically, we want $y_t$ to have 1 in the $v_t$-th output coordinate and $0$ in all the other coordinates. We achieve this for $y_0 = p(\eol)$ by defining $p(\eol)$ to have 1 in the output coordinate, corresponding to the root of $T$, and 0 in all the other coordinates.

    Assume that this invariant is maintained for $v_t$. Our positional encoding is defined in such a way that the $i$-th attention head, for the right choice of key and query matrices, will return  $p(a_i)$, where $a_i$ is the output of the $i$-th a-query at $v_t$ on input $\bar w$. For that, we just need to define $Q^{(i)}\in\mathbb{R}^{d\times d}$, the query matrix of the $i$-th head, as the matrix of a linear transformation that moves the $v$-th output coordinate to the $(v,i)$-th positional coordinate (and the key matrix $K^{(i)}$ is set to be the identity matrix). Then the scalar product $\langle K^{(i)} \beta,  Q^{(i)} y_t\rangle$ will be equal to the $(v_t, i)$-th positional coordinate of $\beta$. For $\beta = p(a)$, this coordinate is inversely proportional to the position of $a$ in the permutation $\tau_{v_t, i}$, and for $\beta = y_\ell$, $\ell = 0, \ldots, t$, the value of this coordinate is 0. Hence, it will be maximized at $a_i$, and the output of the $i$-th head will be $\head_i = p(a_i)$.

    Next, our goal now is to define a matrix $W_O \in\mathbb{R}^{d\times dH}$ in \eqref{eq_multihead} such that the vector
    \[
     \mathsf{multihead} = W_O \cdot \begin{pmatrix}\head_1\\ \vdots \\\head_H\end{pmatrix}  \in\mathbb{R}^d\]
     will be the one-hot encoding of $a_1$ in the first $|A|$ assignment coordinates, the one-hot encoding of $a_2$ in the second $|A|$ assignment coordinates, and so on. Notice that $\head_1, \ldots, \head_H$, in the restriction to the first $|A|$ assignment coordinates, are one-hot encodings of $a_1, \ldots, a_H$, respectively. Thus, it remains to define $W_O$ to be the matrix of a linear transformation that copies the first $|A|$ assignment coordinates of $\head_1$ to the first $|A|$ assignment coordinates of $\mathsf{multihead}$, the first $|A|$ assignment coordinates of $\head_2$ to the second $|A|$ assignment coordinates of $\mathsf{multihead}$, and so on.

     As a result, we can make the sum $y_t + \mathsf{multihead}$ to be a Boolean vector with exactly $H + 1$ ones that  one-hot encodes $v_t$ (in the output coordinates), and also, for $i \in [H]$, one-hot encodes $a_i$ in the $i$-th $|A|$ assignment coordinates. We now achieve that the next output of the decoder:
\[y_{t+1}= W_2 \cdot  \mathrm{ReLU}\left(W_1(\mathrm{multihead} + y_t)\right)\]
one-hot encodes (in the output coordinates) the node $v_{t+1}$ , which is a child of $v_{t}$ followed by the $(a_1, \ldots, a_t)$-labeled edge. We set $W_2$ to be the identity matrix. As for $W_1$, we use the same trick as in the end of the proof of Theorem \ref{thm_eq}. Namely, we notice that for each potential value of $v_{t+1}$, there is precisely one possible value of $v_t$ and $a_1, \ldots, a_H$. This allows us to express any output coordinate of  $W_1(\mathrm{multihead} + y_t)$ as a logical conjuction of $H + 1$ coordinates of  $\mathsf{multihead} + y_t$. The conjunction of $H + 1$ bits $b_0, b_1, \ldots, b_H$ can be written as $\mathrm{ReLU}(b_0 + b_1 + \ldots + b_H - (H -1))$, giving us an expression for the matrix $W_1$ (where we use the special coordinate of the input to express $H - 1$)
\end{proof}

We observe that the \( H \)-head rank can be at most \( H \) times smaller than the normal rank. Specifically, each \( H \)-degree a-query can be computed by performing \( H \) individual a-queries sequentially.

\begin{proposition}
\label{prop_1hhead}
For $f : \Sigma_1 \times \ldots \times \Sigma_n \to O$ and $H \geq 1$, we have $\rk(f) \le H \cdot \rk^{(H)}(f)$.
\end{proposition}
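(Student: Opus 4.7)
The plan is to take a depth-$r$ decision tree $T$ over $H$-degree a-queries computing $f$, where $r = \rk^{(H)}(f)$, and to transform it into a decision tree $T'$ over ordinary a-queries of depth at most $H \cdot r$ that computes the same function. The transformation proceeds node by node: every internal node of $T$ will be expanded into a subtree of depth $H$ in $T'$, so the overall depth multiplies by $H$.

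Concretely, consider an internal node $v$ of $T$ labeled by the $H$-degree a-query $q_v = q_{\tau_1^v} \otimes \cdots \otimes q_{\tau_H^v}$, whose outgoing edges are labeled by the possible outputs $(a_1, \ldots, a_H) \in A^H$. In $T'$ I replace $v$ by a complete $|A|$-ary tree of depth $H$: the root of this gadget asks the ordinary a-query $q_{\tau_1^v}$; each child (one per value $a_1 \in A$) asks $q_{\tau_2^v}$; and in general, the node at depth $j-1$ within the gadget asks the ordinary a-query $q_{\tau_j^v}$, independent of the previously received answers. After $H$ ordinary a-queries, we have recovered the full tuple $(a_1, \ldots, a_H) = q_v(\bar w)$, and at the corresponding leaf of the gadget I attach (recursively transformed) the subtree of $T$ rooted at the child of $v$ along the edge labeled $(a_1, \ldots, a_H)$. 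Leaves of $T$ are kept unchanged, inheriting their output labels.

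To see correctness, observe that on any input $\bar w$, following $T'$ from its root traces exactly the same root-to-leaf path in $T$ as $T$ itself would: at each gadget corresponding to a node $v$, the $H$ ordinary a-queries asked along the descent return precisely $q_{\tau_1^v}(\bar w), \ldots, q_{\tau_H^v}(\bar w)$, which is by definition $q_v(\bar w)$. Hence we descend along the correct edge of $T$ and eventually reach the same leaf, producing the correct value of $f(\bar w)$. The depth bound is immediate, since each of the at most $r$ layers of $T$ is replaced by a gadget of depth $H$.

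There is no real obstacle here; the only point to be slightly careful about is that the ordinary a-queries used inside a gadget are fixed by $v$ and do \emph{not} depend on answers received earlier within that gadget, which is legitimate because the $H$ components of an $H$-degree a-query are chosen non-adaptively by definition of the product $\otimes$. With this transformation in place we conclude $\rk(f) \le H \cdot r = H \cdot \rk^{(H)}(f)$.
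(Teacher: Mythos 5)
Your proposal is correct and matches the paper's argument: the paper proves this proposition with exactly the observation that each $H$-degree a-query $q_{\tau_1}\otimes\cdots\otimes q_{\tau_H}$ can be evaluated by asking its $H$ component a-queries sequentially, multiplying the depth by $H$. Your write-up just spells out the resulting node-by-node gadget replacement in more detail.
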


Proposition \ref{prop_1hhead} allows us to reduce, up to a factor of \( H \), lower bounds on \( \rk^{(H)}(f) \) to lower bounds on \( \rk(f) \). However, this proposition is sometimes unable to provide tight bounds on \( \rk^{(H)}(f) \). This occurs, for instance, when \( \rk^{(H)}(f) \) is not smaller at all than \( \rk(f) \). We present two examples of this phenomenon in this section.

To establish precise lower bound on the decoder depth of a function with \(H\) heads, it suffices to derive a lower bound on its \(H\)-head rank (Theorem \ref{thm_hhead}). However, this task proves to be significantly more challenging than for the single-head rank. Specifically, for the iterated composition function, combinatorial arguments alone, as employed in the proof of Proposition \ref{prop_simple_iter_rank}, are no longer sufficient. Instead, we must rely on techniques from communication complexity to address the problem. For the \(\one{k}_n\) function, we develop a combinatorial argument that is notably more intricate than the one used in the proof of Proposition \ref{prop_simple_kone}.

\subsection{Multihead decoder depth of iterated composition}

In this section, we show a method for lower bounding the multihead rank of a function based on communication complexity~\cite{Kushilevitz_Nisan_1996}.
Let $\mathcal{X}, \mathcal{Y}, \mathcal{Z}$ be finite sets and $f\colon \mathcal{X} \times \mathcal{Y} \to\mathcal{Z}$ be a function. Imagine that there are two players, Alice and Bob. Alice is given $x\in\mathcal{X}$ and Bob is given $y\in\mathcal{Y}$. Their goal is to cooperatively compute $f(x, y)$.  For that, they can send each other messages that are binary words. They want to minimize the number of messages and their total length in bits. 

Formally, a {\em $k$-round  Alice-first communication protocol} $\Pi$  is given by:
\begin{itemize}
    \item $k$ positive integer numbers $\ell_1, \ldots, \ell_k$ (messages lengths); 
    \item a function $M_i\colon\{0,1\}^{\ell_1 + \ldots + \ell_{i - 1}}\times \mathcal{X}\to\{0, 1\}^{\ell_i}$ for every odd $i \in\{1, \ldots, k\}$;
    \item  a function $M_i\colon\{0,1\}^{\ell_1 + \ldots + \ell_{i - 1}}\times \mathcal{Y}\to\{0, 1\}^{\ell_i}$ for every even $i \in\{1, \ldots, k\}$; and 
    \item the output function $out\colon \{0, 1\}^{\ell_1 + \ldots + \ell_k}\to\mathcal{Z}$.
\end{itemize}
The {\em communication complexity} of $\Pi$ is the sum $\ell_1 + \ldots + \ell_k$.

On input $(x, y)\in\mathcal{X}\times\mathcal{Y}$, the {\em output} of $\Pi$ on $(x, y)$ is computed as follows. We inductively define a sequence of binary words $m_1\in\{0, 1\}^{\ell_1}, \ldots, m_k\in\{0, 1\}^{\ell_k}$ by setting 
\begin{align*}
m_i &= M_i(m_1\ldots m_{i-1}, x) \text{ for odd } i\in\{1, \ldots, k\}, \\
m_i &= M_i(m_1\ldots m_{i-1}, y) \text{ for even } i\in\{1, \ldots, k\}.
\end{align*}
 Intuitively, $m_1 = M_1(\varepsilon, x)$ is the first message of Alice that she sends to Bob in the protocol on input $x$. Upon receiving $m_1$, Bob replies with the second message $m_2 = M_2(m_1, y)$ that depends on his input and the first of 
 Alice's messages. Then Alice sends the third message $m_3 = M_3(m_1m_2, x)$, and so on. The output of the protocol is defined as $out(m_1\ldots m_k)\in\mathcal{Z}$. 
 
By $C^{k,A}(f)$ we mean the minimal communication complexity of a {\em $k$-round Alice-first protocol} that computes $f$. By reversing the roles of Alice and Bob, we define {\em $k$-round Bob-first protocols}, and $C^{k,B}(f)$, the minimal communication complexity of a $k$-round Bob-first protocol for a function $f$.

Assume we have a function $f\colon\Sigma_1 \times \ldots \times \Sigma_n\to O$ and a subset $S\subseteq [n]$. Suppose that positions of an input word $\bar w\in \Sigma_1\times\ldots \times\Sigma_n$ are split between Alice and Bob like this: Alice knows letters of $\bar w$ at positions $i\in S$, and Bob knows letters of $\bar w$ at positions $i\in [n]\setminus S$. Their goal is to find out $f(\bar w)$. This defines a function:
\[f^S \colon \left(\prod_{i\in S} \Sigma_i\right) \times \left(\prod_{i\in [n]\setminus S} \Sigma_i\right)\to O,\]
where the two inputs correspond to the parts of $\bar w$ that 
Alice and Bob know, respectively, and the output of is $f(\bar w)$.

Assuming that the $H$-head rank of $f$ is $r$, we construct low-communication $(r+1)$-round Alice-first and Bob-first protocols for $f^S$, for any $S\subseteq [n]$. This gives a method for lower bounding the multihead rank of $f$:  by showing that either $C^{r+1,A}(f)$ and $C^{r+1, B}$ is large enough, we conclude that the $H$-head rank of $f$ is larger than $r$.

\begin{lemma}
\label{comm_lower}
    For every $f\colon\Sigma_1 \times \ldots \times \Sigma_n\to\{0, 1\}$,  for every 
    $S\subseteq [n]$, and for every  $H \geq 1$, denoting $r = \rk^{(H)}(f)$ and $|A|$ the number of assignments for $f$, we have:
    \begin{multline*} 
    C^{r+1,A}(f^S) \le 2Hr\cdot \lceil\log_2 |A|\rceil \ \ \ \text{and} \\  C^{r+1,B}(f^S) \le 2Hr \cdot \lceil\log_2 |A|\rceil.
    \end{multline*}
\end{lemma}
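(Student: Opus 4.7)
The plan is to simulate the execution of a depth-$r$ decision tree $T$ over $H$-degree a-queries that computes $f$, having Alice and Bob exchange along the computation path of $T$ on the shared input $\bar w$ the information each needs from the other to evaluate the queries at every node. For any non-leaf node $v$ with component permutations $\tau_{v,1}, \ldots, \tau_{v,H}$ of the assignment set $A$, the output of the $i$-th a-query on $\bar w$ is $\tau_{v,i}(\min\{k^A_{v,i}, k^B_{v,i}\})$, where $k^A_{v,i}$ is the smallest index $k$ with $\tau_{v,i}(k) = (j, \sigma)$, $j \in S$ and $\sigma$ matching Alice's letter at position $j$, and $k^B_{v,i}$ is defined symmetrically over $j \in [n] \setminus S$. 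Alice can compute each $k^A_{v,i}$ from her input alone, and Bob each $k^B_{v,i}$; each such index fits in $\lceil \log_2 |A|\rceil$ bits.

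To bring the round complexity from the naive $2r$ down to $r+1$, I would pipeline two consecutive levels per round after the first. In the Alice-first protocol: in round $1$, Alice sends her $H$ indices $k^A_{v_0, 1}, \ldots, k^A_{v_0, H}$ for the root $v_0$; for $2 \le i \le r+1$, in round $i$ the designated speaker transmits her $H$ contributions for each of the two levels $i-2$ and $i-1$ that lie in $\{0, \ldots, r-1\}$. An inductive argument on $i$ shows the sender has the necessary information: by the end of round $i-1$ both parties have already exchanged all level-$(i-3)$ contributions, hence know $v_{i-2}$; and since the opponent delivered her level-$(i-2)$ contributions in round $i-1$, the sender can combine them with her own (computable on the fly) to determine $v_{i-1}$ and produce her level-$(i-1)$ indices.

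Over $r+1$ rounds, each party sends exactly $r$ batches of $H$ indices (one per level $0, 1, \ldots, r-1$), for a total of $2rH \lceil \log_2|A|\rceil$ bits. At the end, both parties know the leaf $v_r$ of $T$ reached on $\bar w$ and output its label, establishing the Alice-first bound. The Bob-first protocol is entirely symmetric, swapping roles and shifting the speaker parity by one. The main subtlety is verifying the causal consistency of the pipelined schedule: the sender at round $i$ must already possess enough data to produce contributions for level $i-1$, which reduces to the observation that $v_\ell$ is determined as soon as both parties' level-$(\ell-1)$ contributions have been exchanged; everything else is bookkeeping on parities to confirm that both $r$ even and $r$ odd lead to the same round budget $r+1$.
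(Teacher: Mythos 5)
Your proposal is correct and follows essentially the same route as the paper: each a-query answer is the earlier of the two locally computable "first consistent assignment" indices, giving $2H\lceil\log_2|A|\rceil$ bits per tree level, and the round count drops from $2r$ to $r+1$ by having each speaker after the first bundle the completion of one level with the start of the next — which is exactly the paper's "alternate who speaks first on consecutive queries" trick, just described as pipelining.
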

\begin{proof}
   We first notice that Alice and Bob can compute the value of any $H$-degree a-query $q_{\tau_1}\otimes \ldots \otimes q_{\tau_H}$ by exchanging messages of length $H \cdot \lceil\log_2 a\rceil$. In fact, for a given input $\bar w \in \Sigma_1 \times \ldots \times \Sigma_n$ there are exactly $n$ assignments consistent with $\bar w$. A part of them is known to Alice (for positions in $S$) and the other part to Bob (for positions in $[n] \setminus S$). For each $h = 1, \ldots, H$, Alice and Bob have to calculate the first assignment in the permutation $\tau_h$ which is consistent with $\bar w$. Alice can see which $\bar w$-consistent assignment, known to her, goes first in $\tau_h$, and the same for Bob. Among these two assignments, the one that goes first is the answer to $q_{\tau_h}$. Alice and Bob just have to exchange the indices of these assignments.
For both Alice and Bob it is thus enough to send a $H \lceil\log_2 a\rceil$-bit  message with indices of $H$ assignments.

We already see that an $r$-depth decision tree over $H$-degree a-queries can be simulated by a communication protocol with $2H r\cdot \lceil \log_2 a\rceil$ bits. We need to explain how to arrange this communication in $r + 1$ rounds. For that,  Alice and Bob have to alternate the order in which they exchange their messages in a computation of the $H$-degree a-queries. For example, for the Alice-first protocol, in the computation of the first query Alice has to send her message first and then Bob. Now, for the second query, \emph{Bob has to send his message first} and then Alice. In this way, Bob's messages for the first and for the second query merge into a single round of communication. Similarly, for the third query, Alice has to send first, and then Bob, and so on, getting overall $r + 1$ rounds. The Bob-first protocol is constructed in an analogous fashion.
  \end{proof}

As a corollary, we obtain the following: 

\begin{corollary}
    For every $H$ and $t$, for all but finitely many $n$, we have $\rk^{(H)}(\tcomp_n) = t$.
\end{corollary}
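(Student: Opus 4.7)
My plan is to establish the upper bound by inheritance from the single-head case and the matching lower bound by reducing, via Lemma~\ref{comm_lower}, to a round-sensitive pointer-chasing communication lower bound. The upper bound $\rk^{(H)}(\tcomp_n) \le t$ holds for every $n$: Proposition~\ref{prop_simple_iter_rank} gives $\rk(\tcomp_n) \le t$, and any a-query is trivially an $H$-degree a-query (pad with $H-1$ dummies), so $\rk^{(H)}(\tcomp_n) \le \rk(\tcomp_n) \le t$.

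For the lower bound I would argue by contradiction. Suppose $\rk^{(H)}(\tcomp_n) \le t-1$ for some large $n$. I pick the partition $S = \{1, \ldots, \lfloor n/2 \rfloor\}$ of the input positions, with $1 \in S$, and restrict attention to inputs encoding functions $f\colon [n] \to [n]$ satisfying $f(S) \subseteq [n]\setminus S$ and $f([n]\setminus S) \subseteq S$. On this restricted subclass the iterates $f(1), f^{(2)}(1), \ldots, f^{(t)}(1)$ strictly alternate between the two halves, so $\tcomp_n^S$ is exactly the classical depth-$t$ pointer-chasing problem in which Alice holds the ``odd'' function and Bob the ``even'' one. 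Any communication lower bound for this subproblem transfers back to $\tcomp_n^S$, since restricting the input domain can only make communication easier.

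Applying Lemma~\ref{comm_lower}, the assumption $\rk^{(H)}(\tcomp_n) \le t-1$ then yields a $t$-round Bob-first protocol for $\tcomp_n^S$ of total communication at most $2H(t-1)\lceil\log_2(n^2)\rceil = O(Ht\log n)$. Since $1 \in S$ lies on Alice's side, Bob is the ``wrong starter'' for depth-$t$ pointer chasing, and the classical round-sensitive Nisan--Wigderson lower bound forces any such protocol to use communication polynomial in $n$ (with $t$ appearing only in a denominator, and thus harmless for $t$ fixed). With $H$ and $t$ fixed, this polynomial-in-$n$ lower bound eventually exceeds $O(Ht\log n)$, giving the desired contradiction and therefore $\rk^{(H)}(\tcomp_n) \ge t$ for all but finitely many $n$.

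The step I expect to be most delicate is aligning the round-numbering convention of Lemma~\ref{comm_lower} with the ``wrong-starter'' hypothesis of the pointer-chasing lower bound---in particular, verifying that $t$ rounds with Bob speaking first is genuinely the hard regime for this choice of $S$, by tracking on which side of the partition the running iterate lives at each step. Once that bookkeeping is done, the quantitative clash between the polynomial-in-$n$ lower bound and the $O(\log n)$ upper bound (for $H$ and $t$ held fixed) is immediate for all sufficiently large $n$.
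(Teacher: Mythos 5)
Your proposal is correct and follows essentially the same route as the paper: the paper also takes $S=\{1,\dots,n/2\}$, restricts to functions alternating between the two halves so that $\tcomp_n^S$ contains pointer chasing $\mathrm{PC}_t^{n/2}$ as a subproblem, and combines Lemma~\ref{comm_lower} with the $\Omega(n)$ lower bound for $t$-round \emph{Bob-first} protocols (citing Duris--Galil--Schnitger rather than Nisan--Wigderson, an immaterial difference). Your round-numbering bookkeeping also matches the paper's use of the $C^{r+1,B}$ bound from the lemma.
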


\begin{proof}
    We reduce from a communication problem called \emph{pointer chasing}. In this problem, Alice is given $f_A\colon \{1, \ldots, m\}\to \{1, \ldots, m\}$ and Bob is given $f_B\colon \{1, \ldots, m\}\to \{1, \ldots, m\}$. In the $k$-step pointer chase, denoted here by $\mathrm{PC}_k^m$,  the goal of Alice and Bob is to compute:
    \[\underbrace{\ldots f_A(f_B(f_A}_{k \text{ times}}(1))\ldots)\]
    It is easy to see that $C^{k, A}(\mathrm{PC}_k^m) = O(k\log m)$ (Alice starts by sending $m_1 = f_A(1)$, Bob replies by sending $m_2 = f_B(m_1)$, and so on). It is known that this task requires much longer communication for $k$-round \emph{Bob-first} protocols. Namely, for any constant $k$, we have $C^{k, B}(\mathrm{PC}_k) = \Omega(m)$~\cite{duris1987lower}.

    It remains to notice that $\mathrm{PC}_t^{n/2}$ is a special case of the problem $\tcomp_n^{S}$, for $S = \{1, \ldots, n/2\}$, where Alice gets $(\phi(1), \ldots, \phi(n/2))$ and Bob gets $(\phi(n/2 +1), \ldots, \phi(n))$, for some function $\phi\colon\{1, \ldots, n\}\to\{1, \ldots, n\}$, and the task is to compute $\phi^{(k)}(1)$. Namely, we obtain $\mathrm{PC}_t^{n/2}$as a special case when $\phi$ maps the first half of the inputs into the second half, and the second half into the first half. Assuming that $\rk^{(H)}(\tcomp_n) < t$, by Lemma \ref{comm_lower} we obtain:
    \[
    \Omega(n) = C^{t,B}(\mathrm{PC}_t^{n/2}) \le  C^{t, B}(\tcomp_n^{S}) \le 2H t\cdot \lceil\log_2 n^2\rceil.
  \]
    For any fixed $H,t$ this is true only for finitely many $n$.
\end{proof}

\subsection{Multihead decoder depth of $k$th One}

In this section, we establish a tight lower bound on the multi-head rank of $\one{k}$.

\begin{theorem}
    \label{thm_mh_one}
    For any $k, H\in\mathbb{N}$, for all but finitely many $n\in\mathbb{N}$, we have $\rk^{(H)}(\one{k}_n) = k$.
\end{theorem}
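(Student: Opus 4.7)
The upper bound $\rk^{(H)}(\one{k}_n) \le k$ follows from Proposition \ref{prop_simple_kone} together with $\rk^{(H)}(f) \le \rk(f)$, so the plan is to prove the matching lower bound $\rk^{(H)}(\one{k}_n) \ge k$ for all $n$ sufficiently large. I would proceed by contradiction: assume a decision tree $T$ of depth $d = k - 1$ over $H$-degree a-queries computes $\one{k}_n$, and produce two inputs that both reach the same leaf of $T$ but disagree on $\one{k}_n$, contradicting correctness of $T$.

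The plan is to adversarially descend along $T$, maintaining a partial assignment $\pi \colon [n] \to \{0, 1, *\}$ with the invariant that fewer than $k$ positions of $\pi$ are fixed to $1$. At each internal node carrying the $H$-degree a-query $(q_{\tau_1}, \ldots, q_{\tau_H})$, the heads are processed in turn: for head $h$, I scan $\tau_h$ in order and pick, as the response, the earliest assignment $(i, \sigma)$ that is consistent with $\pi$ (i.e.\ $\pi(i) \neq 1 - \sigma$) and whose acceptance does not saturate the one-budget. Whenever a one-assignment $(i, 1)$ must be scanned past to preserve the invariant (and $\pi(i)$ is still $*$), I implicitly treat it as a rejection by setting $\pi(i) := 0$. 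This is the natural multi-head lift of the single-head adversary used in Proposition \ref{prop_simple_kone}, augmented with a budget-enforcement mechanism that did not need to appear in the $H = 1$ case.

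Granted a bound $|\pi| \le g(H, k)$ at the end of the walk for some function $g$ independent of $n$, the closing argument proceeds analogously to the single-head case. For $n$ large compared to $g(H, k)$, the fixed positions partition $[n]$ into intervals, at least one of which has length $\ge k + 1$; since at most $k - 1$ positions are fixed to $1$ in total, the construction of Proposition \ref{prop_simple_kone} applied inside this long free interval produces two extensions of $\pi$ whose $k$-th ones sit at different positions of the interval, giving two inputs with different $\one{k}_n$-values that reach the same leaf of $T$.

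The main obstacle is exactly bounding $|\pi|$. Once the one-budget is saturated, each subsequent head may, in the worst case, have to scan past a linear-in-$n$ number of one-assignments before reaching an acceptable response, and each such pass converts a previously free position to a $0$-fixation; the naive analysis therefore yields only $|\pi| = O(n)$, which is useless. The plan to overcome this is to sharpen the adversary so that, among responses respecting the invariant, it selects the one that introduces the fewest new fixations, and to perform a careful amortized accounting that charges each newly fixed position to a specific prefix position of some $\tau_h$ where the budget was already tight. I expect this amortized bound to be the technical heart of the proof, matching the paper's description of the argument as combinatorial but considerably more intricate than its single-head counterpart.
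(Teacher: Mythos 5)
Your upper bound and the overall shape of the lower bound (adversary walk, two inputs reaching the same leaf with different values) are fine, but the step you yourself flag as ``the technical heart'' is not just missing -- the bound you hope for is false, and the paper's proof works around this with a genuinely different mechanism. Concretely: a single a-query whose permutation front-loads the one-assignments, e.g.\ $\tau = (1,1),(2,1),\ldots,(\lfloor n/2\rfloor,1),\ldots$, forces your adversary either to accept a new $1$ or to fix $\Omega(n)$ positions to $0$. Since the tree contains $(k-1)H$ individual a-queries but your one-budget is only $k-1$, some queries must receive $0$-answers, so no amortization can yield $|\pi| \le g(H,k)$ independent of $n$. The invariant you want to maintain simply cannot be maintained.

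The paper's proof accepts that a near-linear number of positions may be fixed and instead controls \emph{where} unfixed positions survive. It inducts on $k$, handling only the single $H$-degree query at the root per induction step, and reduces $\one{k}_n$ to $\one{(k-1)}_{f(n)}$ on the unfixed positions between the first and second fixed $1$. Within one step it runs at most $H$ rounds governed by a dichotomy: if some head's permutation is ``far'' from the left-to-right order $\gamma$ on the surviving candidate one-positions (meaning some element is displaced by more than $\sqrt{m}$), that head can be fixed while guaranteeing that at least $\sqrt{m}$ candidates survive -- this is why the surviving count degrades as $n^{2^{-\ell}}$ rather than staying bounded. Once all remaining heads are ``close'' to $\gamma$, a pigeonhole lemma produces a single position $i_r$ appearing in the first half of every remaining permutation, so all heads can be fixed at once while fixing only $O(1)$ positions to the right of $i_r$. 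This leaves $\Theta(n^{2^{-H}})$ unfixed positions between the first and second fixed $1$, which is what feeds the induction. If you want to salvage a one-pass argument, you would need to replace your constant bound on $|\pi|$ with a quantitative statement of this kind (a polynomially shrinking but still unbounded reservoir of free positions in a controlled location), which is essentially reinventing the paper's close/far analysis.
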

 \begin{proof}
For brevity, inside the proof we refer to ``decision trees over $H$-degree a-queries'' as  ``$H$-degree decision trees''.

     We show the statement of the theorem by induction on $k$. For $k = 1$, it is enough to notice that our function is not constant, meaning that it cannot be computed by a 0-depth $H$-degree decision tree.

We proceed to the induction step.
     Assume that the theorem is proven for $k - 1$. We derive the theorem for $k$. More specifically, for any $H$ we construct a function $f\colon\mathbb{N}\to\mathbb{N}$ with $\lim_{n\to\infty} f(n) = +\infty$ such that the following holds: for any $n$, any $H$-degree decision tree $T$  for $\one{k}_n$ can be transformed into an $H$-degree decision tree of smaller depth for $\one{(k-1)}_{f(n)}$.

     Let us first finish the induction step assuming the  above statement is proven. We have to show the existence of $n_0$ such that for all $n\ge n_0$, we have $\rk^{(H)}(\one{k}_n) \ge k$. By the induction hypothesis, there exists $n_1$ such that for all $n\ge n_1$, we have $\rk^{(H)}(\one{(k-1)}_n) \ge k-1$. It is enough to take $n_0$ such that $f(n)\ge n_1$ for all $n\ge n_0$. Such $n_0$ exists because $\lim_{n\to\infty} f(n) = +\infty$. Indeed, assume for contradiction the existence of a $(k-1)$-depth $H$-degree decision tree, computing $\one{k}_n$ for some $n\ge n_0$. We deduce the existence of a $(k-2)$-depth $H$-degree decision tree, computing $\one{(k-1)}_{f(n)}$. This gives a contradiction since $f(n) \ge n_1$.

     \medskip

     We now show the existence of a function $f$ with the above properties. Take an $H$-degree decision tree $T$, computing $\one{k}_n$. Let $d$ be the depth of $T$ and let $q = q_{\tau_1}\otimes \ldots \otimes q_{\tau_H}$ be the $H$-degree a-query at the root of $T$. 

     By a \emph{partial input} we mean a string $y\in\{0, 1, *\}^n$. A string $x\in\{0,1\}^n$ is a complementation of a partial input $y\in\{0, 1, *\}$ if $x_i = y_i$ for every $i\in [n]$ with $y_i\in\{0, 1\}$.
     Our task is to find a partial input $y$ with the following properties:
     \begin{itemize}
         \item (a) all complementations of $y$ have the same value of $Q$;
         \item (b) between the first fixed 1 and the second fixed 1, there are at least $f(n)$ unfixed positions in $y$.
     \end{itemize}
Such $y$ yields a $(d-1)$-depth $H$-degree decision tree for $\one{(k-1)}_{f(n)}$ as follows. Consider any $z\in\{0, 1\}^{f(n)}$. Fill the unfixed positions in $y$ before the first fixed 1 by 0s, and unfixed positions after the second fixed 1 by 1s. To the unfixed positions between the first and the second 1, put bits of $z$. Let $x\in\{0, 1\}^n$ be the resulting vector. Observe that $\one{k}_n(x)$ determines $\one{(k-1)}_{f(n)}(z)$. Observe also that any a-query about $x$ can be computed, asking one a-query about $z$.
It thus remains to compute the value of $T$ on $x$. We can skip the first query of $T$ because all complementations of $y$, one of which is $x$, have the same value of $q$. This gives us a depth-$(d-1)$ $H$-degree decision tree for $\one{(k-1)}_{f(n)}$.

We now give a partial input $y$ with the above properties. We gradually construct $y$ by fixing more and more positions and 
making sure that all complementations of $y$ have the same values on all  a-queries $q_{\tau_1}, \ldots,  q_{\tau_H}$, constituting $q$. We proceed in at most $H$ iterations. After $\ell$ iterations, we make sure that the following two conditions are met: (a) at least $\ell$ a-queries out  of $q_{\tau_1}, \ldots,  q_{\tau_H}$ are already ``fixed'', meaning that any two complementations of the current $y$ have the same values on any of these $\ell$ a-queries; (b) before the first fixed 1, there are $n_\ell \ge n^{2^{-\ell}}$ unfixed positions in $y$.

For $\ell = 0$, these conditions are trivially met. Assume that we have performed $\ell\ge 0$ iterations. For notational simplicity, assume that a-queries that are still not fixed are $q_{\tau_1}, \ldots, q_{\tau_{H-\ell}}$. 
We now need the following definition.

\begin{definition}
   Let $B$ be a finite set and $\gamma, \tau\colon\{1, \ldots, |B|\}\to B$ be two its permutations. We say that $\tau$ is \textbf{close} to $\gamma$ if $\gamma^{-1}(\tau(j)) \le j + \sqrt{|B|}$ for every $j\in\{1, \ldots, |B|\}$ (the $j$-th element of the permutation $\tau$ has position at most $j + \sqrt{|B|}$ in the permutation $\gamma$, for every $j$). Otherwise, we say that $\tau$ is \textbf{far} from $\gamma$.  
\end{definition}

Set $m = n_\ell$, 
let $i_1 < i_2 < \ldots < i_{m}$ be unfixed positions before the first fixed 1 in $y$, and let $B = \{(i_1, 1), \ldots, (i_m, 1)\}$ be the set of assignments to value 1 at these positions. We consider a permutation $\gamma$ of $B$ where assignments go in the increasing order of the indices of their positions:   
\[\gamma = (i_1, 1)\ldots (i_m, 1).\]
We ``compare'' $\gamma$ with restrictions of $\tau_1, \ldots, \tau_{H - \ell}$ to $B$ (recall that $\tau$ is the permutation corresponding to a-query $q_\tau$). If one of these restrictions is far from $\gamma$, we do one more iteration. If all these restrictions are close to $\gamma$, we finish the construction of $y$.

\medskip

In more detail, assume first that one of the restrictions is far from $\gamma$, say, $\tau_1$. Then for some $j\in\{1, \ldots, m\}$,  the $j$-th element of this restriction has position at least $j + \sqrt{m} + 1$ in $\gamma$.  In other words, for some $r\ge j + \sqrt{m} + 1$, there are most $j -1$ assignments in $B$ that precede $(i_r, 1)$ in $\tau_1$. We now try to fix $\tau_1$ while maintaining at least $\sqrt{m}\ge n^{2^{-\ell - 1}}$ unfixed positions before the first fixed 1 in $y$, proceeding after that to the next iteration.

We go through all the assignments of $\tau_1$, starting with $\tau_1(1)$, then $\tau_1(2)$, and so on. For each assignment $(i, \sigma)$ under consideration, the value at $i$-th position of $y$ is either already fixed or we fix it according to the rule defined in the next paragraph. If the assignment is fixed to $\lnot \sigma$, we go to the next assignment of $\tau_1$. If it is fixed to $\sigma$, we stop -- $\tau_1$ is now fixed.

Our fixation rule for unfixed positions in $y$ is as follows: given $(i, \sigma)$, we fix the $i$-th position to $\sigma$ unless $(i, \sigma) \in B\setminus \{(i_r, 1)\}$.
As a result, the position of the first fixed $1$ moves to $i_r$, or it stays the same (because among the positions $i_1, \ldots, i_m$,  we are only allowed to fix $i_r$ to 1). In any case, observe that we do not go any further in $\tau_1$ than the assignment $(i_r, 1)$.
Hence, in our process, we can only fix the positions of the assignments in $B$ that precede $(i_r, 1)$ in $\tau_1$ (there are at most $j - 1$ of them) and one final position more. In particular, among positions $i_1, \ldots, i_{r - 1}$,
at least $r - 1 - j \ge \sqrt{m}$ positions will remain unfixed, and all of them will precede the  first fixed 1 in $y$, even if it moves to $i_r$.

\medskip

Finally, we stop repeating iterations once all restrictions of $\tau_1, \ldots, \tau_{H - \ell}$ are close to $\gamma$.  The construction of $y$ is finished with the use of the following combinatorial lemma.

\begin{lemma}
\label{combi_lemma}
    For any fixed $h$ and for all sufficiently large $m$, the following holds.
    Let $B$ be a finite set of size $m$, and let $\gamma, \tau_1, \ldots, \tau_h$ be its permutations such that $\tau_1, \ldots, \tau_h$ are all close to $\gamma$. Then there exists $r\in[1, m/2 + \sqrt{m}]$ such that $\gamma(r)$ has the position at most $m/2$ in all permutations $\tau_1, \ldots, \tau_h$.
\end{lemma}
\begin{proof}
    For every $s = 1, \ldots, h$, let us put a mark on the elements $\tau_s(1), \ldots, \tau_s(m/2)$. Since $\tau_1, \ldots, \tau_h$ are all close to $\gamma$, we mark only the first $m/2 + \sqrt{m}$ elements in the permutation $\gamma$. We need to show that one of these elements gets $h$ marks. Indeed, overall we put $hm/2$ marks. Since $hm/2 > (h - 1)(m/2 +\sqrt{m})$ for large enough $m$, we conclude that for these $m$ it is impossible that all elements get at most $h - 1$ marks.
\end{proof}

Using the lemma, we take $r\in [1, \ldots, m/2 + \sqrt{m}]$ such that for every $s =1, \ldots, h = H - \ell$, the position of 
$(i_r, 1)$ in $\tau_s$ is at most $m/2$. We now fix all permutations $\tau_1, \ldots, \tau_h$ exactly in the same way as before. Namely, for each $\tau_s$, we consider assignments $\tau_s(1), \tau_s(2), \tau_s(3), \ldots$ one by one, and we fix a corresponding position in a way that fixes $\tau_s$ unless it would involve the assignment from $B\setminus\{(i_r, 1)\}$. Thus, when considering $\tau_s$, among positions $i_1, \ldots, i_m$, we fix only positions for the assignments of $B$ that precede $(i_r, 1)$ in $\tau_s$, and possibly one final position more. The assignment $(i_r, 1)$ has the position at most $m/2$ in $\tau_s$, which means that all $B$-assignments that precede it in $\tau_s$ have the positions at most $m/2 +\sqrt{m}$ in $\gamma$ since $\tau_s$ is close to $\gamma$. In other words, when fixing $\tau_s$, we fix at most 1 position among $i_{r+1},\ldots, i_m$.

Doing so for all permutations $\tau_1, \ldots, \tau_h$, we obtain that among $i_1, \ldots, i_m$, only position $i_r$ can be fixed to 1, and among $i_{r + 1}, \ldots, i_m$, at most $h = O(1)$ are fixed to $0$. As a result, all a-queries are fixed, and between the first fixed 1 (which is now at $i_r$) and the second fixed one there are at least $m - r - O(1) \ge  m/3 \ge (1/3) n^{2^{-H}} = f(n)$ unfixed positions, as required.
\end{proof}

We observe that our communication complexity tool is not applicable in this case, as for any partition of the input positions between Alice and Bob, there exists a 2-round protocol with logarithmic communication that computes the position of the $k$-th one: Alice sends the positions of the first $k$ ones in her part of the input, and Bob does the same.

\begin{proposition}
For any $k, n$ and $S\subseteq [n]$: 
$$C^{2,A}(\one{k}_n^S) = C^{2,B}(\one{k}_n^S) =  O(k\log n).$$
\end{proposition}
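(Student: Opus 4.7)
The plan is to exhibit a simple, explicit 2-round Alice-first protocol (and symmetrically, a 2-round Bob-first protocol) of communication complexity $O(k\log n)$. Since Alice knows $\bar w$ restricted to $S$ and Bob knows $\bar w$ restricted to $[n]\setminus S$, each of them can locally compute the positions of all ones in their own portion. The observation is that to determine the first $k$ ones of $\bar w$ globally, and in particular the $k$-th one, it suffices to know the first $k$ ones contributed by \emph{each} side, because any 1 beyond the $k$-th in a single side cannot be among the first $k$ ones overall.

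Concretely, the Alice-first protocol is as follows. In round 1, Alice computes the sorted list $p_1 < p_2 < \cdots < p_{k'}$ of the positions of the first $\min(k, \#\text{ones in Alice's part})$ ones in her part of $\bar w$ (so $k' \le k$), and sends them together with the value $k'$. In round 2, Bob does the symmetric thing: he sends the sorted list $q_1 < \cdots < q_{\ell'}$ of the first $\min(k, \#\text{ones in Bob's part})$ ones in his part. The output function merges these two sorted lists into a single sorted list and returns its $k$-th entry if it has at least $k$ entries, and $n+1$ otherwise. Correctness is immediate: since $p_i \le p_{k'+1}$ was truncated only when at least $k$ ones exist on Alice's side, any dropped Alice-one has global index $> k$, and similarly for Bob; hence the merged list contains the first $k$ global ones (or all of them if there are fewer than $k$).

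For the communication bound, each position lies in $\{1,\ldots,n\}$ and each count lies in $\{0,\ldots,k\}$, so each message has length $O(k\log n)$ bits. This gives a 2-round Alice-first protocol of complexity $O(k\log n)$, establishing $C^{2,A}(\one{k}_n^S) = O(k\log n)$. Swapping the roles of Alice and Bob (Bob sends first, then Alice) yields $C^{2,B}(\one{k}_n^S) = O(k\log n)$ by the identical argument.

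There is no real obstacle here; the whole content is the observation that truncating each side's list to the first $k$ ones preserves the information needed to recover the global $k$-th one, so the proof is essentially a description of the protocol and a one-line correctness check.
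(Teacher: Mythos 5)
Your protocol is exactly the one the paper sketches (each party sends the positions of the first at most $k$ ones in its part, and the output function merges the two lists), and your correctness and length analysis are fine. No issues.
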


If we wanted to use Lemma \ref{comm_lower} to obtain a lower on $\rk^{(H)}(\one{k}_n)$, we would have needed $C^{2,A}(\one{k}_n^S)$ or $C^{2,B}(\one{k}_n^S)$ to grow super-logarithmically with $n$ for some $S\subseteq [n]$. 

\section{PAC Learning of Functions of Bounded Multi-Head Rank}
\label{sec_pac}
We start by recalling the basics of Valiant's polynomial-time PAC learning model~\cite{valiant1984theory}. It starts with a sequence of ``hypothesis classes'' $\{H_n\subseteq\{0, 1\}^{X_n}\}_{n\in\mathbb{N}}$, where $X_n$ and $H_n$ are sets of size $2^{poly(n)}$ for each $n$. We assume that elements of $X_n$ and $H_n$ are indexed by $poly(n)$-length strings such that, given encodings of a function $f\in H_n$ and an input $x\in X_n$, it is possible to evaluate $f(x)$ in $poly(n)$-time.

In the PAC-learning model, a ``learner'' receives $n$ (index of the hypothesis class), the accuracy parameter $\varepsilon\in (0,1)$, and the confidence parameter $\delta\in(0, 1)$. There is also a function $h\in H_n$ and a probability distribution $\mu$ over the set $X_n$. The learner does not know $h$ and $\mu$ but has ``sample access'' to them. It means that the learner can sample a pair $(x, h(x))$, where $x\sim \mu$. This sampling can be repeated independently as many times as the learner wants.

We want to achieve that with probability at least $1 - \delta$ (over randomness of the samples), the learner outputs a ``hypothesis'' $\widehat{h}\colon X_n\to\{0,1\}$ such that $\Pr_{x\sim \mu}[h(x) = \widehat{h}(x)] \ge 1 - \varepsilon$. If there is a learner that for every $n, \varepsilon, \delta$, achieves that in time $poly(n, 1/\varepsilon, 1/\delta)$, we say that our sequence of classes is polynomial-time PAC-learnable.

 How precisely does the learner output $\widehat{h}$? We might require that $\widehat{h}\in H_n$, then it is enough to output a binary string, indexing this element of $H_n$. This model is called \emph{proper} PAC learner. In turn, in the \emph{improper} PAC learner the output hypothesis $\widehat{h}$ might be outside $H_n$. In this case, different formalizations are possible, for instance, we might ask the learner to output a Boolean circuit, computing $\widehat{h}$. In what follows, we will focus on the proper PAC learning.

It is folklore (see, for instance, the proof of Theorem 1.3 in \cite{kearns1994introduction}) that proper polynomial-time PAC-learning for a sequence of hypothesis classes $\{H_n\}_{n = 1}^\infty$ is equivalent, under randomized polynomial-time reductions, to the \emph{consistency problem} for $\{H_n\}_{n = 1}^\infty$. In the consistency problem, one receives a number $n$ and a ``sample''
\[(x_1, y_1)\ldots (x_m,y_m)\in (X_n\times\{0, 1\})^m.\]
The question is
whether there exists a function $h\in H_n$ (and we have to find it if it exists) that is ``consistent'' with the sample, that is,
\[h(x_1) = y_1, \ldots, h(x_m) = y_m.\]

We will consider hypothesis classes, consisting of functions of bounded multi-head rank. More precisely, by ``functions of $H$-head rank at most $k$ over the alphabet $\Sigma$'' we mean a sequence of hypothesis classes $\{H_n\}_{n = 1}^\infty$, where $H_n = \{f \colon \Sigma^n \to\{0,1\} \mid \rk^{(H)}(f) \le k\}$. We first generalize a result of~\cite{ehrenfeucht1989learning} who gave a polynomial-time algorithm for the consistency problem for functions of rank at most $k$, for any fixed $k$. This is for their version of rank, coinciding with our 1-head rank. It was established only for the binary alphabet (and the notion of the rank was only defined for this case), and we extend this result to an arbitrary alphabet $\Sigma$.

\begin{theorem}
\label{thm_polynomialtime}
    For any $k$ and $\Sigma$, the consistency problem for functions of 1-head rank at most $k$ over the alphabet $\Sigma$ is polynomial-time solvable.
\end{theorem}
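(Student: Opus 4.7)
The plan is to reduce the rank-$k$ consistency problem to consistency for $k$-decision lists, which is polynomial-time solvable by the natural generalization of Rivest's greedy algorithm. First I would invoke the non-Boolean analogue of Proposition \ref{prop_bool_eq} (whose argument transfers verbatim to arbitrary alphabets), so that a function $f\colon\Sigma^n\to\{0,1\}$ has rank at most $k$ iff it is computable by a YES-NO tree of YES-depth at most $k$ whose internal nodes ask queries of the form ``$x_i=\sigma$?'' for $(i,\sigma)\in [n]\times\Sigma$.

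Then I would prove the following structural characterization: the class of rank-$k$ functions over $\Sigma^n$ coincides exactly with the class of $k$-decision lists over the literal set $\{x_i=\sigma : i\in [n],\, \sigma\in \Sigma\}$. Here a $k$-decision list is a finite sequence $(t_1,v_1),\ldots,(t_L,v_L),v_{L+1}$ with each $t_j$ a conjunction of at most $k$ such literals and each $v_j\in\{0,1\}$. The forward direction translates a YES-NO tree of YES-depth at most $k$ into a decision list by flattening: each root-to-``$v$''-leaf path uses at most $k$ YES-edges, which become the literals of the corresponding clause, and the left-to-right order of leaves encodes the NO-conditions via priority. The converse direction stacks the clauses of the list into a YES-NO tree whose main NO-spine visits the clauses in order, with each clause's literals spawning a short YES-chain terminating in the clause's value leaf.

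Given the equivalence, I would run the natural generalization of Rivest's greedy algorithm: repeatedly scan the at most $(n|\Sigma|)^k$ candidate $k$-clauses, looking for a pair $(t,v)$ such that every sample in the current $S'$ satisfying $t$ has label $v$ and at least one does; then peel those samples off and append $(t,v)$ to the list. Halt when $S'$ has a uniform label (which becomes the default), or report failure if no peelable $(t,v)$ exists and $S'$ is non-uniform. Correctness uses the standard argument: if any $k$-DL is consistent with the current $S'$, then its first clause is itself peelable, so the algorithm finds \emph{some} peelable $(t,v)$ and reduces $|S'|$ by at least one. This yields running time $O(m^2\cdot k\cdot (n|\Sigma|)^k)$, polynomial for fixed $k$ and $\Sigma$.

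The main obstacle is the rank-$k$ tree $\Leftrightarrow$ $k$-DL equivalence in the non-binary setting. While classical over $\{0,1\}$, flattening a YES-NO tree requires a small but essential care step: after the clauses coming from the YES-subtree at the root (each obtained from an inductively built $(k-1)$-DL by prepending the root literal ``$x_i=\sigma$''), one must insert a one-literal ``separator'' clause ``$x_i=\sigma \to v^*$'', where $v^*$ is the default of the YES-subtree's decision list. Without this separator, samples with $x_i=\sigma$ matching none of the YES-derived clauses would fall through to the NO-subtree's clauses and be misclassified; with it, they correctly output $v^*$ before the NO-subtree's list is ever consulted.
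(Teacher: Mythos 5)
Your reduction breaks at its central structural claim: $k$-decision lists are \emph{not} contained in the class of rank-$k$ functions for $k\ge 2$ (the other containment, the Blum-style flattening with your separator clauses, is fine). Concretely, the $2$-DNF $g_m=\bigvee_{i=1}^{m}(x_i\wedge y_i)$ on $n=2m$ Boolean variables is a $2$-decision list, yet its rank grows with $m$: any decision tree for $g_m$ has at least $3\cdot 2^{m-1}$ leaves (whatever variable the root queries, one restriction is a copy of $g_{m-1}$ and the other contains one, so the leaf count at least doubles per term), while Ehrenfeucht--Haussler's counting lemma bounds the number of leaves of a rank-$r$ tree on $n$ variables by $\sum_{i\le r}\binom{n}{i}\le (n+1)^{r}$; hence $\rk(g_m)=\Omega(m/\log m)$. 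Your own converse construction already exhibits the symptom: the NO-edge leaving the \emph{second} literal of clause $j$ must lead to a full copy of the tree for clauses $j+1,\dots,L$, so the YES-depth telescopes to roughly the length $L$ of the list rather than to $k$. Consequently your algorithm solves a different and strictly easier problem: it decides whether a consistent $k$-decision list exists and outputs one. A ``yes'' answer does not certify that a consistent rank-$k$ function exists, and the hypothesis produced need not lie in the class $H_n=\{f:\rk(f)\le k\}$, which is exactly what the \emph{proper} consistency problem (and the proper PAC-learning corollary the theorem feeds into) requires.

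Your greedy-peeling instinct is nonetheless the right one; what must change is the notion of ``clause'' being peeled, so that it matches the a-query structure rather than the decision-list structure. A depth-$(k+1)$ a-query tree is, at its top level, an ordered list of \emph{single} assignments $a$ in which the bucket $S_a$ of samples containing $a$ must be rank-$k$ consistent, with the recursion on $k$ happening inside the buckets rather than in the width of the conjunctions. The paper's proof runs exactly this: it shows that if $S$ is rank-$(k+1)$ consistent and non-empty then some non-empty $S_a$ is rank-$k$ consistent, peels such an $S_a$ off (any subsample of a consistent sample stays consistent), recurses on $S\setminus S_a$ at level $k+1$ and on $S_a$ at level $k$, and stitches the returned trees together by promoting $a$ to the front of the top permutation. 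This yields a polynomial-time algorithm by double induction on $k$ and $|S|$, and it outputs a genuine depth-$(k+1)$ a-query tree, so the learner is proper.
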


Hence, functions of 1-head rank at most $k$ are polynomial-time properly PAC-learnable, for any fixed $k$. Another interesting corollary is that for any fixed $k$ and $H$, functions of $H$-head rank at most $k$ are \emph{improperly} polynomial-time PAC-learnable (simply because by Proposition \ref{prop_1hhead} they belong to a larger properly polynomial-time PAC-learnable class of functions with 1-head rank at most $kH$). This leaves a question -- can we make them \emph{properly} polynomial-time PAC-learnable? Our main result in this section refutes this possibility already for 2-head rank-1 functions, and for the binary alphabet.

\begin{theorem}
\label{thm_nphard}
The decision version\footnote{One just needs to answer, whether a hypothesis, realizing the sample, exists, no need to find it.} of the consistency problem for functions of 2-head rank at most 1 over $\Sigma = \{0,1\}$ is NP-complete. 
\end{theorem}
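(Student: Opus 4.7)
My plan has two parts. First, membership in NP. A 2-head rank-1 Boolean function on $\{0,1\}^n$ is fully specified by two permutations $\tau_1,\tau_2$ of the set of assignments $A=[n]\times\{0,1\}$ together with a labeling $L:A\times A\to\{0,1\}$, which is polynomial-size data admitting polynomial-time evaluation. I would guess $(\tau_1,\tau_2,L)$ nondeterministically and, for each sample $(x_j,y_j)$, verify $L(q_{\tau_1}(x_j),q_{\tau_2}(x_j))=y_j$ in polynomial time.

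Second, NP-hardness. I would reduce from a suitable NP-hard combinatorial problem whose structure matches the ``two parallel lists plus joint labeling'' flavor of 2-head rank-1 functions. Natural candidates are Set Splitting (Positive NAE-3-SAT) or graph 3-colorability; both capture the partition-style obstruction that a 2-head rank-1 function can realize. The general idea is to associate input positions with vertices or variables of the instance, then build positive and negative samples so that a consistent 2-head rank-1 function is forced to read off a valid solution of the underlying NP-hard instance from the prefixes of $\tau_1$ and $\tau_2$.

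Concretely, for a Set Splitting instance with vertex set $V$ and hyperedges $E$, I would introduce a vertex gadget for each $v\in V$ whose consistent labeling forces the assignment $(v,1)$ (say) to appear early in exactly one of $\tau_1,\tau_2$; the choice of permutation then encodes the color of $v$. For each hyperedge I would add hyperedge gadgets with a carefully chosen label pattern that is satisfiable by a valid $L$ only when not all three vertices in the hyperedge share the same ``early'' permutation. The forward direction---that every splitting 2-coloring yields a consistent triple $(\tau_1,\tau_2,L)$---is a direct construction: put color-0 vertex assignments at the head of $\tau_1$, color-1 at the head of $\tau_2$, and define $L$ case by case.

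The main obstacle is the converse direction. 2-head rank-1 functions are strictly more expressive than the 2-term DNFs of Pitt--Valiant, because the labeling $L$ on pairs in $A\times A$ permits an arbitrary Boolean combination over as many as $(2n)^2$ cells, not just a disjunction. Consequently, the Pitt--Valiant reduction cannot be copied verbatim, and I have to engineer the gadgets finely enough to rule out spurious triples $(\tau_1,\tau_2,L)$ that cheat by, for example, interleaving a few non-vertex assignments to disguise a monochromatic hyperedge, or by using the freedom in $L$ to relabel a bad pair. I expect the bulk of the technical work to be a case analysis, over the possible positions of the vertex-labeled assignments inside $\tau_1$ and $\tau_2$, showing that any consistent function does induce a genuine splitting 2-coloring of $V$.
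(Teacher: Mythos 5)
Your NP-membership argument is fine: a depth-$1$ tree over $2$-degree a-queries is exactly a pair of permutations of $A$ plus a labeling of pairs of assignments, and verification is polynomial. Your hardness plan also picks the same source problem as the paper (monotone NAE-3-SAT, i.e.\ Set Splitting) and the same encoding intuition: the truth value of a variable is recorded by which of the two orders ranks that variable's assignment ``early.'' So the high-level strategy is right.

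However, what you have written is a plan rather than a proof, and the part you defer --- the soundness of the reduction --- is precisely where all the difficulty lives. The paper spends essentially the entire argument on it. Two concrete things are missing. First, you never actually construct the vertex and hyperedge gadgets or the labeling-forcing samples; you only assert that they can be ``engineered finely enough.'' Second, and more importantly, when you encode sets or clauses as characteristic vectors in $\{0,1\}^n$, every input carries a $0$-assignment in every coordinate it does not mention, and nothing a priori prevents a cheating solution from making some $0$-assignment one of the two maxima and thereby separating $S^+$ from $S^-$ without inducing any valid splitting (the paper gives an explicit example where the naive characteristic-vector reduction is satisfiable while the underlying instance is not). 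The paper's fix is a dedicated gadget: three auxiliary coordinates $u,v,w$ together with three disjoint copies of the working coordinates, plus a block of samples (all weight-one vectors under two different auxiliary patterns, the all-ones vector, etc.) whose separability forces the two maxima to lie among the $u,v$ assignments and forces, in each order, all but one $1$-assignment of the working coordinates to dominate all $0$-assignments; the triplication then guarantees at least one clean copy. You correctly identify that spurious pairs $(\tau_1,\tau_2,L)$ ``interleaving a few non-vertex assignments'' are the obstacle, but identifying the obstacle is not the same as overcoming it --- without an explicit gadget and the accompanying case analysis of which assignments can be maximal, the reduction's converse direction is unproven and the argument does not establish NP-hardness.
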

This result implies that functions of 2-head rank at most 1, already in the binary case, are not properly polynomial-time PAC learnable (unless NP is a subset of BPP).

\subsection{Proof of Theorem \ref{thm_polynomialtime}}

For brevity, within the proof we call the consistency problem for functions of 1-head rank at most $k$ `rank-$k$ consistency problem''. Likewise, if a sample $S$ is consistent with some function of 1-head rank at most $k$, we say that $S$ ``rank-$k$ consistent''.

By induction on $k$, we show that the rank-$k$ consistency problem is polynomial-time solvable. For $k = 0$, we notice that functions, having 1-head rank 0, are precisely the all-0 and the all-1 functions. Checking if a given sample is consistent with one of them is straightforward.
We now assume that the  rank-$k$ consistency problem is polynomial-time solvable. Based on that, we now give a polynomial-time algorithm for the rank-$(k+1)$ consistency problem.

We use the following notation: for a sample $S$ and an assignment $a$, let $S_a$ denote the subsample of $S$, consisting of all inputs of $S$ that have $a$.

\begin{lemma}
\label{lemma_simple}
    If $S$ is rank-$(k+1)$ consistent and non-empty, then there exists an assignment $a$ such that $S_a$ is non-empty and rank-$k$ consistent.
\end{lemma}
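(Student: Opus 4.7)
The plan is to exploit a decision tree $T$ over a-queries of depth at most $k+1$ consistent with $S$ (which exists by assumption). Let $q_\tau$ be the a-query at the root of $T$. Each child of the root is labeled by an assignment $a \in A$, and the subtree $T_a$ at that child has depth at most $k$ and hence, viewed as a decision tree on the full input domain, computes some function $g_a$ with $\rk(g_a) \le k$. For an input $\bar w$, the computation of $T$ proceeds through the $a$-child exactly when $q_\tau(\bar w) = a$, that is, when $a$ is the \emph{first} assignment in the permutation $\tau$ consistent with $\bar w$. So I need to select an $a$ such that every input $x$ appearing in $S_a$ actually satisfies $q_\tau(x) = a$; then $g_a$ will agree with $f$, and hence with $S$, on all of $S_a$.

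Concretely, I would set
\[
j^* = \min\bigl\{j \in \{1,\ldots,|A|\} : \text{some } (x,y) \in S \text{ has } x \text{ consistent with } \tau(j)\bigr\},
\]
which is well-defined because $S$ is non-empty (any input is consistent with at least one assignment, so the set is non-empty). Let $a^* = \tau(j^*)$. By definition of $j^*$ there is some $(x,y) \in S$ consistent with $a^*$, so $S_{a^*}$ is non-empty, establishing the first requirement.

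For the second requirement, I would show that every $(x,y) \in S_{a^*}$ has $q_\tau(x) = a^*$. Indeed, the first index $j$ with $\tau(j)$ consistent with $x$ is at most $j^*$ since $\tau(j^*) = a^*$ is consistent with $x$; and by the minimality of $j^*$ no sample in $S$ is consistent with any $\tau(j)$ for $j < j^*$, so $j = j^*$. Therefore $T$ on input $x$ descends into the $a^*$-child of the root, which means $f(x) = g_{a^*}(x) = y$, i.e., $g_{a^*}$ is consistent with $S_{a^*}$. Since $\rk(g_{a^*}) \le k$, the subsample $S_{a^*}$ is rank-$k$ consistent.

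The only slightly delicate point is that $S_a$ is defined via consistency of inputs with $a$, not via the path through the tree; a priori an input in $S_a$ might slip through an earlier assignment of $\tau$ and exit through a different child of the root, in which case $g_a$ need not agree with $f$ on it. Choosing $a^* = \tau(j^*)$ via the minimality argument above is precisely what rules this out, and once it is in place the rest is an immediate reading off from the subtree structure.
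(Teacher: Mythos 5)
Your proof is correct and follows essentially the same route as the paper: the paper likewise takes the first assignment $a$ in the root permutation $\tau$ for which $S_a$ is non-empty and observes that all inputs of $S_a$ then descend into the $a$-labeled subtree of depth at most $k$. Your explicit minimality argument for why every input of $S_{a^*}$ satisfies $q_\tau(x) = a^*$ is exactly the point the paper states more tersely.
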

\begin{proof}
    Let $T$ be a depth-$(k+1)$ a-query decision tree that is consistent with $S$. Let $q_\tau$ be its top a-query, and $\tau$ be a permutation of assignments, defining it. Take the first assignment $a$ in $\tau$ such that $S_a$ is non-empty (such $a$ exists because $S$ is non-empty so inputs from $S$ have some assignments). Note that all inputs from $S_a$  go to the sub-tree of $T$ that lies beneath the $a$-labeled child of the root. This sub-tree, having depth at most $k$, establishes that $S_a$ is rank-$k$ consistent. 
\end{proof}

Lemma \ref{lemma_simple} provides the following algorithm for the rank-$(k+1)$ consistency problem. If $S$ is empty, there is nothing to do. If $S$ is non-empty, we check if there exists an assignment $a$ such that $S_a$ is non-empty and rank-$k$ consistent (using the induction hypothesis to check the latter in polynomial time). If no such $a$ exists, due to the Lemma \ref{lemma_simple}, we know that $S$ is not rank-$(k+1)$ consisent.

Now, if such $a$ exists, we take any such $a$ and recursively solve the rank-$(k+1)$ consistency problem on the sample $S^\prime = S\setminus S_a$. If it returns that $S^\prime$ is not rank-$(k+1)$ consistent, we know that $S$ is not as well.

Now, suppose that the recursive procedure returns a depth-$(k+1)$ a-query decision tree $T^\prime$, consistent with $S^\prime$, We turn it into a depth  depth-$(k+1)$ a-query decision tree $T$, consistent with $S$. Indeed, notice that inputs from $S^\prime = S\setminus S_a$ do not have the assignment $a$. In particular, from the root of $T^\prime$, we can delete the sub-tree under the $a$-labeled child, this will not affect the computation of $T^\prime$ on inputs from $S^\prime$. Likewise, we can move $a$ to the first position in the permutation, defining the top a-query of $T^\prime$. Finally, under the $a$-labeled child, where it is currently empty, we can put the tree $T_a$, the $k$-depth a-query decision tree that is consistent with $S_a$ (and that can be found using the algorithm from the induction hypothesis). This modification ensures that inputs from $S_a$ are as well computed correctly by the resulting tree $T$. 

The complexity of the algorithm is polynomial because each recursive call is run on the strictly smaller sample.

\subsection{Proof of Theorem \ref{thm_nphard}}
Recall how a depth-1 decision tree $T$  over 2-head a-queries works. It is given by two linear orders on assignments, defining the a-query at the top. To compute $T(x)$, we treat $x = x_1\ldots x_n$ as a set of assignments $\{(1, x_1), \ldots, (n, x_n)\}$ that are present in this input. Then we take a pair of assignments 
 that are maxima of this set w.r.t~our linear orders. This pair of elements determines $T(x)$.

 Hence,  a sample $S = S^+\cup S^-$ (where $S^+, S^-$ denote the parts of $S$ that are classified positively and negatively, respectively) is head-2 rank-1 consistent if and only if there exist 2 linear orders on the set of assignments such that no $x\in S^+$ and $y\in S^-$, viewed as sets of assignments, have the same maxima in both linear orders.

We define an more general problem that we call the \emph{2-order separability problem}. In this problem, we are given two families $\mathcal{F}$ and $\mathcal{G}$ of non-empty subsets of some set $U$. The question is whether there exist two linear orders on $U$ such that no pair of sets $S\in \mathcal{F}$ and $T\in\mathcal{G}$ have the same maxima in both orders. 

 In our proof, we first establish NP-completeness of the 2-order separability problem, and then reduce it to the consistency problem for 2-head rank-1 functions over the binary alphabet.  Inclusions in NP are trivial and are omitted.

\begin{lemma}
    The 2-order separability problem is NP-complete.
\end{lemma}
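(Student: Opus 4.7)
The plan is to verify the easy direction and then focus on NP-hardness. Membership in NP is immediate: a certificate consists of two permutations of $U$, and in polynomial time we compute, for each $S \in \mathcal{F} \cup \mathcal{G}$, its pair of maxima and then check that the sets of signatures $\{(\max_{<_1}S, \max_{<_2}S) : S \in \mathcal{F}\}$ and $\{(\max_{<_1}T, \max_{<_2}T) : T \in \mathcal{G}\}$ are disjoint.

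For NP-hardness I would reduce from a Boolean constraint problem whose structure naturally reflects the ``two orders'' degree of freedom; NAE-3SAT (equivalently, SET SPLITTING) is the most natural candidate, since the requirement that every clause have both a true and a false literal mirrors the requirement that every conflicting pair $(S,T) \in \mathcal{F}\times \mathcal{G}$ be distinguished by at least one of the two orders. The construction I would attempt introduces, for each variable $x_i$, a pair of elements $\{t_i, f_i\}$ together with a small family of ``variable gadget'' sets placed into $\mathcal{F}$ and $\mathcal{G}$ that force, in any valid pair of orders, one of $t_i, f_i$ to ``win'' in a distinguished role (and the other to ``lose''), thereby inducing a Boolean assignment. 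For each clause $C_j = (\ell_{j,1} \vee \ell_{j,2} \vee \ell_{j,3})$ I would add a ``clause gadget'': a small collection of pairs $(S_j, T_j)$ in $\mathcal{F}\times \mathcal{G}$ built from the literal-elements of $C_j$, specifically engineered so that the only way to distinguish every such $(S_j, T_j)$ by one of the two orders is if at least one literal in $C_j$ is true and at least one is false under the induced assignment.

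The main obstacle is that the orders must be \emph{global} linear orders on $U$, not arbitrary choice functions $\phi_i: \mathcal{F} \cup \mathcal{G} \to U$ with $\phi_i(S) \in S$. This transitivity/consistency constraint is the essential difficulty, because the choices made inside one gadget can propagate and interfere with choices in other gadgets. I would manage this by isolating gadgets with ``marker'' elements that dominate within a gadget but can be consistently placed globally; the variable elements $\{t_i, f_i\}$ would be placed ``close'' to each other in both orders so their relative positions can be set almost independently of elements belonging to other gadgets, and clause gadgets would only refer to their own markers plus the literal-elements of the clause.

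To establish correctness I would prove two implications. For the forward direction, given a satisfying NAE assignment, I would describe explicit $<_1, <_2$ obtained by placing, for each $i$, the ``true'' literal-element above its negation in $<_1$ and in the reverse way in $<_2$ (so that the two orders jointly encode the assignment and its complement), then verify by case analysis on the gadgets that every $(S,T)\in \mathcal{F}\times \mathcal{G}$ is distinguished. For the reverse direction, given any pair of separating orders, I would read off an assignment from the winner in each variable gadget and use the clause gadgets to argue that every clause has both polarities present, i.e.\ is NAE-satisfied. Combining both directions with the NP membership step completes the proof.
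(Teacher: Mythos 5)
Your proposal correctly settles membership in NP and correctly identifies both the right source problem (NAE-3-SAT; the paper uses the monotone variant, NP-complete by Schaefer) and the essential obstacle, namely that the two orders must be \emph{global} linear orders rather than independent per-gadget choice functions. However, what you have written is a plan, not a proof: every step that carries the actual mathematical weight is left as a promissory note. The variable gadget is described only as ``a small family of sets that force one of $t_i, f_i$ to win,'' and the clause gadget as a collection ``specifically engineered so that'' separability is equivalent to the NAE condition. Designing such families so that (i) they admit \emph{no} spurious separating orders, (ii) they do not accidentally collide with each other's signatures, and (iii) the forced structure genuinely propagates to a Boolean assignment, is the entire content of the hardness proof, and none of it is present. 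In particular your proposed isolation mechanism (``marker elements that dominate within a gadget but can be consistently placed globally'') is exactly the point where naive attempts fail: an element that is maximal within its own gadget's sets is a candidate maximum for every other set containing it, so one must verify cross-gadget non-interference explicitly.

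For comparison, the paper resolves these issues with a concrete design: a single global element $0$ plus three fresh elements $u,v,w$ per variable and per clause. The variable gadget (sets such as $\{u,v,w,x,0\}$, $\{u,x,0\}$, $\{v,x,0\}$ in $\mathcal{F}$ versus $\{u,w,x,0\}$, $\{v,w,x,0\}$, $\{u,0\}$, $\{v,x\}$ in $\mathcal{G}$) forces $u$ to be maximal in one order and $v$ in the other, which in turn forces $x$ and $0$ to compare \emph{oppositely} in the two orders; the truth value of $x$ is then read off as whether $x>0$ in the first order. The clause gadget forces the maximum of $\{x,y,z,0\}$ to differ from $0$ in both orders, which is exactly the NAE condition. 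Note that this encodes a variable by its comparison with the shared element $0$ rather than by a pair $t_i,f_i$ as you suggest; since the instance is monotone, one element per variable suffices, and the shared $0$ is what ties all gadgets to a single consistent assignment. Until you exhibit gadgets at this level of detail and verify both directions of the equivalence against them, the reduction is not established.
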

\begin{proof}
  We reduce from monotone NAE-3-SAT (3-SAT where all variables are without negations and the task is to have at least one value 0 and at least one value 1 in every clause) whose NP-completeness is proved in~\citeauthor{schaefer1978complexity}. Let $\phi$ be an instance of a monotone NAE-3-SAT.  Our set $U$ in the 2-order separability problem will include as elements variables of $\phi$, a special ``zero'' element 0, and some other ``fresh elements.

First, for every variable $x$ of $\phi$, we put the following sets into our families:
\begin{align}
    \label{eq_x1}
    \{u,v,w,x,0\}, \{u,x,0\},\{v,x,0\} &\text{ into } \mathcal{F},\\
    \label{eq_x2}
    \{u,w,x,0\}, \{v,w,x,0\},\{u,0\},\{v,x\} &\text{ into } \mathcal{G},
\end{align}
where $u,v,w$ are fresh elements, different for different variables.

Additionally, for every NAE clause NAE$(x,y,z)$ of $\phi$, we put the following sets into our families:
\begin{align}
    \label{eq_cl1}
   \{u,v,w,x,y,z,0\}, \{u,x,y,z,0\},\{v,x,y,z,0\}  &\text{ into } \mathcal{F},\\
    \label{eq_cl2}
  \{u,w,x,y,z,0\}, \{v,w,x,y,z,0\},\{u,0\},\{v,0\} &\text{ into } \mathcal{G},
\end{align}
where again, $u,v,w$ are fresh and unique for each clause (and different from the corresponding fresh elements for variables). Description of the reduction is finished.

\medskip

Assume first that $\phi$ is satisfiable. We show that (\ref{eq_x1}--\ref{eq_cl2}) is also satisfiable (separable by 2 orders). Take a satisfying assignment to $\phi$. If $x = 1$, set it to be larger than $0$ in the first order and smaller than 0 in the second order. If $x = 0$, set $x$ to be smaller than $0$ in the first order and larger than 0 in the second order.
The relative order between the variables is not important. 

To satisfy  (\ref{eq_x1}--\ref{eq_x2}) for a variable $x$, we define the relative orders of $u,v,w$ with respect to $x,0$ as follows:
 \[(u, w, 0, x, v) \text{ and } (v, w, x,0, u)\]
 (taking into account that in one order $x$ is smaller than $0$, and bigger in the other). 
     Indeed, then the pairs of maxima in \eqref{eq_x1} are $(u,v),(u,x), (0,v)$, and in \eqref{eq_x2} are $(u, w), (w,v), (u,0), (x,v)$. Notice also that every pair includes either $u,v,w$, this distinguishes these sets from all the remaining sets in our construction.

     To satisfy (\ref{eq_x1}--\ref{eq_x2}) for a clause NAE$(x,y,z)$,  we define the relative orders of $u,v,w$ with respect to $x,y,z,0$ as follows:
     \[(u, w, ord_1(x,y,z,0), v) \text{ and } (v, w, ord_2(x,y,z,0), u),\]
     where $ord_1(x,y,z,0), ord_2(x,y,z,0)$ are orderings of these elements in the first and the second order, respectively. Since NAE$(x,y,z)$ is satisfied, among $x,y,z$ there is a variable, equal to 1, and there is a variable, equal to 0. Hence, in both orders, one of $x,y,z$ is larger than $0$. That is, in \eqref{eq_cl1}, pairs of maxima are $(u,v)$, $(u,m_2), (m_1,v)$, where $m_1,m_2\neq 0$ are the maximuma of $\{x,y,z,0\}$ in the first and the second order, respectively. In turn,  in \eqref{eq_cl1}, pairs of maxima are $(u,w),(v,w),(u,0),(0,v)$. Once again, every pair includes either $u,v,w$, distinguising these sets from the remaining sets of the construction for other variables and clauses.

     \medskip

     Finally, we show that if (\ref{eq_x1}--\ref{eq_cl2}) is satisfiable, then $\phi$ is satisfiable as well. Take a separating pair of orders. We first show that for any variable $x$, it is bigger than 0 in one of the orders and smaller than 0 in the other order. First, in \eqref{eq_x1} we have a set with all 5 elements, and in the other, we have a set without $u$ and the set without $v$. This enforces $u$ to be maximum in one of the orders and $v$ to be the maximum in the other order (restricted to 5 elements in question). Now, to separate $\{u,x,0\}$ from $\{u, 0\}$, the element $x$ must be larger than $0$ in the order where $u$ is not maximal. Likewise, $0$ must be larger than $x$ in the order where $v$ is not maximal, otherwise we do not separate $\{v,x,0\}$ from $\{v,x\}$.

     We set $x = 1$ if and only if $x$ is larger than 0 in the first order. We show that this assignment satisfies $\phi$. Take any clause NAE$(x,y,z)$ of $\phi$. It is enough to show that the maximum of $\{0,x,y,z\}$ is not 0 in both orders. Indeed, by definition, if we look at the first order, some variable out of $x,y,z$ is larger than $0$ there, meaning that it is set to 1. But they cannot be all set to 1, because then they are all smaller than 0 in the second order.

     To show that $0$ cannot be the maximum of $\{0,x,y,z\}$ in neither of the orders, we first notice that by the same argument, in (\ref{eq_cl1}--\ref{eq_cl2}) the maximum of one order is $u$, and of the other is $v$ (because we have a set with everything versus sets without $u$ and without $V$). The claim then follows from the fact that we must separate $\{u,x,y,z,0\}$ from $\{u,0\}$ and $\{v,x,y,z,0\}$ from $\{v,0\}$.
\end{proof}
We now reduce from the  2-order separability problem to the consistency problem for 2-head rank-1 functions.

Let us start with an idea. Consider some instance of the 2-order separability problem, for example, $U = \{u, v, w\}$, $\mathcal{F} = \{\{u, v, w\}\}$, and $\mathcal{G} = \{\{u, v\}, \{u,w\},\{v,w\}\}$. 
Let us now convert this instance into a sample $S$. Coordinates of vectors are indexed by elements of $U$, and sets in families $\mathcal{F}, \mathcal{G}$ are turned into characteristic vectors of these sets. As a result, we get:
\[S^+ = \{111\},\qquad S^-= \{110, 101, 011\}.\]
If the initial instance is satisfiable, then this new instance is also satisfiable. Indeed, we can identify elements of $U$ with 1-assignments, we can make all 0-assignments strictly smaller than all 1-assignments in  both orders, meaning that initial 2 orders separating $\mathcal{F}$ and $\mathcal{G}$ will separate $S^+$ and $S^-$. However, a problem is that there is no guaranty that all 0-assignments go before all 1-assignments in both orders, meaning that the instance with characteristic vectors can be satisfiable while the initial instance is not. This is indeed the case for the example above: if, to the contrary, we make all 0-assignments strictly bigger than all 1-assignments in both orders, we will separate $S^+$ from $S^-$ (in fact, just one order is enough). However, the initial families $\mathcal{F}$ and $\mathcal{G}$ are not separable by 2 linear orders. Indeed, for any pair of orders, the set $\{u,v,w\}\in\mathcal{F}$ will have both maxima, and some set from $\mathcal{G}$ too.

Somehow, we will have to enforce that all 0-assignments are smaller than all 1-assignments, or that this is true for \emph{almost all assignments}. As a warm-up, consider a sample where $S^+$ consists of vectors with exactly one 1, and $S^- = \{00\ldots0\}$. We claim the following: for any linear order that separates $S^+$ from $S^-$, all but possibly one 1-assignment are larger than all 0-assignments (this is a statement about one linear order, not two).  Indeed, can we make some $0$-assignment maximal? No, because all $ 0$ assignments appear in some vector in $S^+$ and in the unique vector of $S^-$. Hence, we have to choose some 1-assignment as maximal: this makes the vector that has this 1-assignment distinguished from all the other vectors. Removing it from $S^+$, we see that still all 0-assignments appear both in $S^+$ and $S^-$, meaning that we have to make another 1-assignment to be the next maximal element. The same is true until at least 2 vectors are still in $S^+$. When we removed all but one vector from $S^+$, there is now a $ 0$-assignment that does not appear in $S^+$, namely, in the position with 1 of the unique vector left in $S^+$. In principle, this 0-assignment can be made larger than the 1-assignment in the same position, but all the other 1-assignments have to be larger than all $0$-assignments.

Of course, for our problem, we have to enforce something similar for both linear orders. Our reduction works like this. If the size of $U$ in the initial 2-order separability instance is $n$, we will have $3n + 3$ coordinates: $3n$ ``working'' coordinates and $3$ auxiliary ones called $u, v, w$. Now, we split working coordinates into 3 groups of size $n$. We do a reduction with characteristic vectors in these 3 groups independently, with coordinates outside groups being 0. That is, for each group, sets from families are mapped into their characteristic vectors in coordinates of the groups, with 0s outside the group.
If the initial instance is satisfiable, all 3 can be satisfied by the same 2 linear orders on 1-assignments, and making all 0-assignments smaller than all 1-assignments. 

We will now add some gadget with the following property: to satisfy this gadget, in both linear orders, all but one 1-assignment in working coordinates must be larger than all 0-assignments in the working coordinates. This will ensure that if the instance with the gadget is satisfiable, the initial instance is also satisfiable. Indeed, there will be at most 2 ``bad'' working coordinates, and in one of the 3 groups, all 1-assignments will be larger than all 0-assignments. We will also make sure that adding this gadget cannot ruin satisfiability -- if the initial instance is satisfiable, the instance with the gadget will also be satisfiable.

We now describe our gadget. We assume that auxiliary coordinates go first. To $S^+$, we add all vectors that have 100 in the auxiliary coordinates and exactly one 1 in the working coordinates, and also the vector 0100...0 (it has 010 in the auxiliary coordinates and all-0s in the working coordinates). Now, to $S^-$ we add all vectors that have 010 in the auxiliary coordinates and exactly one 1 in the working coordinates, and now the vector 1000...0. We also add two more vectors to each $S^+$ and $S^-$. Namely, we add 1011...1 and 0111\ldots 1 to $S^+$, and we add 1111...1 and 0011...1 to $S^-$:
\[
S^+_{gadget} = \begin{tabular}{ ccc|cccc} 
$u$ & $v$ & $w$ & \multicolumn{4}{c}{Working coordinates}\\
1 & 0 & 0 & 1 & 0 &\ldots & 0\\
    1 & 0 & 0 & 0 & 1 &\ldots & 0 \\
    $\vdots$ &  $\vdots$ &  $\vdots$ &  $\vdots$ & $\vdots$ & $\ddots$ & $\vdots$\\
    1 & 0 & 0 & 0 & 0 & \ldots & 1\\
    \hline
    0 & 1 & 0 & 0 & 0 & \ldots & 0\\
    \hline
    1 & 0 & 1 & 1 & 1 & \ldots & 1 \\
    0 & 1 & 1 & 1 & 1 & \ldots & 1
\end{tabular},\qquad S^-_{gadget} = \begin{tabular}{ ccc|cccc} 
$u$ & $v$ & $w$ & \multicolumn{4}{c}{Working coordinates}\\
0 & 1 & 0 & 1 & 0 &\ldots & 0\\
    0 & 1 & 0 & 0 & 1 &\ldots & 0 \\
    $\vdots$ &  $\vdots$ &  $\vdots$ &  $\vdots$ & $\vdots$ & $\ddots$ & $\vdots$\\
    0 & 1 & 0 & 0 & 0 & \ldots & 1\\
    \hline
    1 & 0 & 0 & 0 & 0 & \ldots & 0\\
    \hline
    1 & 1 & 1 & 1 & 1 & \ldots & 1 \\
    0 & 0 & 1 & 1 & 1 & \ldots & 1
\end{tabular}
\]
In this picture, we call auxiliary coordinates $u, v$, and $w$, and we have separated them by a vertical line from the working coordinates. We also separated each part into 3 groups by horizontal lines to increase readability.
\begin{lemma}
    Any pair of linear orders where 
    \begin{itemize}
        \item all 1-assignments are larger than all 0-assignments
        \item in one linear order, the maximal element is $(u, 1)$ and the next biggest is $(w, 1)$, but $(v, 1)$ is smaller than all the other 1-assignments; in the other linear order, the maximal element is $(v, 1)$ and the next biggest is $(w, 1)$, and $(u,1)$ is smaller than all the other 1-assignments.
    \end{itemize}
    separates $S^+_{gadget}$ from $S^-_{gadget}$.
\end{lemma}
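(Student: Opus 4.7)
The plan is to argue by direct enumeration of the pair of maxima. The key simplification, which I would state at the outset, is that by hypothesis every $1$-assignment exceeds every $0$-assignment in both orders, and every vector in $S^+_{gadget}\cup S^-_{gadget}$ contains at least one $1$, so the maximum in each order is always a $1$-assignment. Hence the pair of maxima of an input depends only on its set of $1$-positions. Combined with the given structure of the orders --- in order~$1$, $(u,1)>(w,1)>[\text{working }1\text{-assignments}]>(v,1)$, and in order~$2$, $(v,1)>(w,1)>[\text{working }1\text{-assignments}]>(u,1)$ --- this lets me read off each pair of maxima mechanically.

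Next I would tabulate the pair $(\max_1,\max_2)$ for each row, by type. For $S^+_{gadget}$: the one-working-one rows with $u=1$ give $((u,1),(j,1))$ where $j$ is the working position set to $1$, because in order~$2$ the absent $(v,1)$ is skipped and $(u,1)$ is smaller than every working $1$-assignment; the row $0\,1\,0\,0\cdots 0$ gives $((v,1),(v,1))$ since $(v,1)$ is its only $1$-assignment; the row $1\,0\,1\,1\cdots 1$ gives $((u,1),(w,1))$ since $(v,1)$ is absent and $(w,1)$ is then the order-$2$ maximum; and the row $0\,1\,1\,1\cdots 1$ gives $((w,1),(v,1))$ by the symmetric argument on order~$1$. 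The same procedure on $S^-_{gadget}$ yields $((j,1),(v,1))$ for the one-working-one rows with $v=1$, $((u,1),(u,1))$ for $1\,0\,0\,0\cdots 0$, $((u,1),(v,1))$ for $1\,1\,1\,1\cdots 1$, and $((w,1),(w,1))$ for $0\,0\,1\,1\cdots 1$.

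Finally I would check, by inspection of these two short lists, that no pair realised by $S^+_{gadget}$ coincides with any pair realised by $S^-_{gadget}$. Using the disjointness of the working positions from $\{u,v,w\}$, most pairings are eliminated by a single-coordinate mismatch: e.g.\ $((u,1),(j,1))$ differs from every negative pair either in its first coordinate (not a working position) or in its second (which is); $((v,1),(v,1))$ and $((u,1),(u,1))$ are trivially distinct from all other pairs they are compared against, and so on. I do not anticipate any real obstacle: once the orders' action on $1$-assignments is fixed, the whole argument is bookkeeping, so the only risk is a transcription error when enumerating the (at most) sixteen relevant pairings, which I would mitigate by presenting the positive and negative pair-lists side by side.
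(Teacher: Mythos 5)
Your proof is correct and takes essentially the same route as the paper's: observe that every gadget vector contains a $1$, so each maximum is the largest $1$-assignment present, then tabulate the pair of maxima for every row and check that the positive and negative lists are disjoint. (Your pair $\bigl((w,1),(v,1)\bigr)$ for the row $0\,1\,1\cdots 1$ is the correct one; the paper's text has a small slip there, writing $(w,1),(u,1)$.)
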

\begin{proof}
Let us look first at the last 2 vectors in both $S^+_{gadget}$ and $S^-_{gadget}$ (the third groups). They will be separated from each other and from everything else. The all-1 vector has as its two maxima $(u, 1), (v, 1)$, the vector $0011\ldots1$ has both maxima $(w, 1)$, and vectors $1011\ldots1$ and $011\ldots 1$ have pair of maxima $(u, 1), (w, 1)$ and $(w, 1), (u, 1)$, correspondingly. All the other vectors have the maximal element in one of the orders, but something smaller than $(w, 1)$ in the other order.

Now, all vectors in the 1st group of $S^+_{gadget}$ have the maxima $(u,1)$ in one of the orders. This distinguishes them from the 1st group of $S^-_{gadget}$.  The vector of the second group of  $S^-_{gadget}$ also has $(u, 1)$, but it does not have other 1-assignments, so both its maxima are $(u,1)$. It distinguishes it from the 1st of $S^+_{gadget}$ because vectors there have the second maxima in one of the working coordinates (recall that $(u,1)$ is the smallest 1-assignment in the order where it is not maximal).

Swapping the roles of $u$ and $v$, one can show that the 1st group of $S^-_{gadget}$ will be separated from the 1st and the 2nd group of $S^+_{gadget}$. Now, the second groups of  $S^+_{gadget}$ and  $S^-_{gadget}$ are separated because in one of them both maxima are $(u,1)$, and in the other $(v,1)$.
\end{proof}
This lemma implies that if the initial instance of the 2-separability problem is satisfiable, this instance with the gadget is also satisfiable, because we have freedom to choose any order on 1-assignments in the working coordinates in both linear orders, and the gadget will be separated from characteristic vectors in working coordinates since in the gadget, at least one maxima will be $(u, 1), (v,1)$, or $(w, 1)$, which will not be the case for the characteristic vectors.

To ensure that the other direction also holds (if the instance with the gadget is satisfiable, then the initial instance of the 2-separability problem also is), it is enough, as we discussed, to prove the following:
\begin{lemma}
    For any pairs of orders that separate $S^+_{gadget}$ from $S^-_{gadget}$, we have, for both orders, that all but one 1-assignment in working coordinates are larger than all 0-assignment in working coordinates.
\end{lemma}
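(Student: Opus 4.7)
My plan is to prove the lemma by contradiction. Suppose that $<_1, <_2$ separates $S^+_{gadget}$ from $S^-_{gadget}$, yet two distinct working positions $i_1, i_2$ satisfy $(i_r, 1) <_1 $ some working 0-assignment, for $r = 1, 2$. I will derive a contradiction; the symmetric claim for $<_2$ follows by exchanging $<_1$ and $<_2$. Denote by $m_k(X)$ the $<_k$-maximum of a finite set $X$ of assignments, and set $T_k := m_k(\bar y^-)$, $T_k' := m_k(\bar y^+)$. Because $\bar y^-$ and $\bar y^+$ each contain all working 0-assignments, $T_k$ and $T_k'$ dominate every working 0-assignment in $<_k$; in addition $T_k \geq_k (u,1), (v,0), (w,0)$ and $T_k' \geq_k (u,0), (v,1), (w,0)$.

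The first main step uses the pairs $(\bar x_i^+, \bar y^-)$ and $(\bar x_i^-, \bar y^+)$, each of which differs only by swapping $(i, 0) \leftrightarrow (i, 1)$ at position $i$. A routine case analysis on the shared maximum shows that the pairs-of-maxes differ iff, in some $<_k$, either $(i, 1) >_k T_k$ or $T_k = (i, 0)$ (and analogously with $T_k'$ for the second pair). Since $(i_r, 1) <_1 T_1, T_1'$ for $r = 1, 2$ (both exceed the max working 0-assignment), distinguishability in $<_1$ can only hold through the exceptional branches $T_1 = (i_r, 0)$ or $T_1' = (i_r, 0)$, each available for at most one $r$. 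A subcase analysis on which of $\{T_1, T_1', T_2, T_2'\}$ equals $(i_1, 0)$ or $(i_2, 0)$ --- invoking also the distinguishability of $\bar y^+$ from $\bar y^-$ in the delicate branch $T_1 = (i_1, 0), T_2 = (i_2, 0)$ --- yields working indices $i, j$ with $(i, 1) >_2 T_2$ and $(j, 1) >_2 T_2'$. Setting $Q_2 := m_2(\{(w, 1)\} \cup \{(k, 1) : k \text{ working}\})$, we obtain $Q_2 \geq_2 (i, 1) >_2 T_2 \geq_2 (u, 1), (v, 0)$ and analogously $Q_2 >_2 (u, 0), (v, 1)$, so $Q_2 >_2 (u, \beta), (v, \alpha)$ for every $\alpha, \beta \in \{0, 1\}$.

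The final step uses the third-group vectors $A_{\alpha\beta}$, where $A_{\alpha\beta}$ has auxiliary bits $(u, v, w) = (\alpha, \beta, 1)$ and all working bits equal to $1$ (so $A_{10}, A_{01} \in S^+$ and $A_{11}, A_{00} \in S^-$). Their four pair distinguishabilities give: (I) some $(v, \alpha) >_k \max(Q_k, (u, 1))$ (from $A_{10}$ vs $A_{11}$); (II) some $(u, \beta) >_k \max(Q_k, (v, 0))$ (from $A_{10}$ vs $A_{00}$); (III) some $(u, \beta) >_k \max(Q_k, (v, 1))$; and (IV) some $(v, \alpha) >_k \max(Q_k, (u, 0))$. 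By $Q_2 >_2$ all four aux elements, every one of (I)--(IV) fails in $<_2$ and therefore must hold in $<_1$. But then (I) rules out $(u, 1)$ as the $<_1$-maximum of $\{(u, 0), (u, 1), (v, 0), (v, 1)\}$, (II) rules out $(v, 0)$, (III) rules out $(v, 1)$, and (IV) rules out $(u, 0)$, leaving no element to be the maximum --- contradiction.

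The main obstacle is the subcase analysis in the second step to establish $(i, 1) >_2 T_2$ and $(j, 1) >_2 T_2'$, particularly the branch $T_1 = (i_1, 0)$ and $T_2 = (i_2, 0)$ (and its mirror), in which neither $i_1$ nor $i_2$ directly supplies the required inequality. Here one leverages the separation of $\bar y^+$ from $\bar y^-$: under strict orders they cannot have identical maxima in both orders, forcing $T_1' \neq T_1$ or $T_2' \neq T_2$, which via the $\bar x_{i_r}^-$ versus $\bar y^+$ constraint applied to the appropriate index supplies the missing inequality (using that in this branch $T_2 \geq_2 T_2'$, so the same index witnesses both $Q_2 >_2 T_2$ and $Q_2 >_2 T_2'$).
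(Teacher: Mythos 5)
Your proposal is correct, but it is organized quite differently from the paper's argument. The paper first proves a structural ``technical claim'' pinning down the only four possible pairs of globally maximal assignments (namely $\{(u,1),(v,1)\}$, $\{(u,0),(v,0)\}$, $\{(u,0),(u,1)\}$, $\{(v,0),(v,1)\}$), and then, in each case, observes that the first group of $S^+_{gadget}$ together with the second group of $S^-_{gadget}$ all share the maximum of one order, so they must be separated by the other order's maxima --- which is exactly the warm-up ``single-one vectors versus the all-zero vector'' argument. You instead run a direct contradiction: assuming two working $1$-assignments sit below some working $0$-assignment in $<_1$, you use the pairs $(\bar x_{i_r}^+,\bar y^-)$, $(\bar x_{i_r}^-,\bar y^+)$ and $(\bar y^+,\bar y^-)$ to force a working-or-$w$ $1$-assignment $Q_2$ above all four $u,v$-assignments in $<_2$, and then show the four third-group separation requirements cannot all be discharged in $<_1$ alone. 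This buys you a proof that never determines the global maxima and treats each separation constraint locally via the clean ``two sets differing in one coordinate'' criterion; the paper's route buys a reusable structural fact (the four admissible maxima pairs) that it also leans on elsewhere in the reduction. I checked your delicate branch: with $T_1=(i_1,0)$ and $T_2=(i_2,0)$, the remaining subcases either give some $(i_r,1)>_2 T_2'$ (which transfers to $T_2$ because $T_2'\ge_2 T_2$, since $(i_2,0)\in\bar y^+$), or force $T_1=T_1'$ and $T_2=T_2'$ (contradicting separation of $\bar y^+$ from $\bar y^-$), or force $T_1,T_1'$ to be two distinct working $0$-assignments each dominating all working $0$-assignments in $<_1$ (impossible). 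One small slip: in that branch the inequality you need is $T_2'\ge_2 T_2$, not $T_2\ge_2 T_2'$ as written (the reversed inequality belongs to the mirror branch); the correct inequality is immediate there, so this does not affect the argument.
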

\begin{proof}
    We need to establish the following technical claim: for any pair of orders that separate $S^+_{gadget}$ from $S^-_{gadget}$, the only possible pairs of maximal elements in that orders are the following:
    \begin{itemize}
        \item $(u, 1), (v,1)$;
        \item $(u, 0), (v,0)$;
        \item $(u, 0), (u,1)$;
        \item $(v,0), (v,1)$.
    \end{itemize}
    Imagine that we have already established that. Observe the following: for any of these 4 cases, the 1st group in $S^+_{gadget}$ and the 2nd group in $S^-_{gadget}$ have the maximal element in one of the orders, and the 2nd group in $S^+_{gadget}$ with the 1st group in $S^-_{gadget}$ have the maximal element in the other order. Take now, for example, the 1st group in $S^+_{gadget}$ and the 2nd group in $S^-_{gadget}$. All vectors have the same values in auxiliary coordinates, and then in the working coordinates these are vectors with exactly one 1 vs the all-zero vector.  Now, since all these vectors have the maximal element of one of the orders, they have to be separated by their maxima in the other order. As we discussed in our motivating example, in that other order, all but one 1-assignments in the working coordinates are larger than all 0-assignments. Thus, we obtained what we want for one of the orders, for the other order this can be obtained by considering similarly the 2nd group of $S^+_{gadget}$ and the 1st group in $S^-_{gadget}$.

    It remains to establish our technical claim about 4 possible pairs of maximal elements. First, we need to see that no assignment in a coordinate other than $u, v$ can be made maximal in either of the orders. For instance, take the $w$ coordinate. Why cannot we make the assignment $(w, 0)$ maximal in one of the orders? Then vectors having this assignment have to be distinguished by the other order. The problem, however, is that among vectors that have $(w, 0)$, we have all possible assignments in all the other coordinates both in $S^+_{gadget}$ and $S^-_{gadget}$. Making any of them maximal we confuse some vector from $S^+_{gadget}$ with some vector from  $S^-_{gadget}$.
    
    We will have a similar problem with making $(w, 1)$ one of the maximums. In restrictions to vectors that have $(w, 1)$, both $S^+_{gadget}$ and $S^-_{gadget}$ have both 0 and 1 in coordinates $u$ and $v$, and both have only 1 in the working coordinates. This makes these restrictions indistinguishable by the second order.
    
    The same check can be done for the working coordinates -- by symmetry, it is enough to take only the first one. Restriction to the  1-assignment in this coordinate has both 0,1 in every other coordinate, in both $S^+_{gadget}$ and $S^-_{gadget}$. In turn, for the 0-assignment almost the same is true except for the $w$-coordinate, where both $S^+_{gadget}$ and $S^-_{gadget}$ have only 0.

Thus, it is already established that both maxima have to belong to $u, v$-coordinates.  ``Bad pairs'' that we have to refute are:
\begin{itemize}
    \item $(u,0)$ and $(v,1)$;
    \item $(u,1)$ and $(v,0)$;
    \item  $(u,0)$ and $(u,0)$ (the same maximal elements in both orders);
     \item  $(u,1)$ and $(u,1)$;
      \item  $(v,0)$ and $(v,0)$;
      \item  $(v,1)$ and $(v,1)$.
\end{itemize}
One can check that for any of these cases, both $S^+_{gadget}$ and $S^-_{gadget}$ have a vector having both assignments from the pairs.  Those 2 vectors would have been indistinguishable.
\end{proof}
\section{Final Remarks}
\label{sec:final} 

We have shown that the expressive power of single-layer Transformers with hard attention is tightly connected to the notion of rank of functions. Extending this characterization to more layers or to soft attention is a challenging future direction. In a contemporaneous manuscript, \citeauthor{chen2024theoretical} have proved unconditional lower bounds on the embedding dimension of multilayer decoder-only Transformers  with soft attention that compute iterated function composition. However,  their version of the problem differs significantly from the one considered here: they have several functions to compose, and each function is completely given in a single token. 
We plan to explore whether the techniques used in their work can be applied to strengthen our results.

\end{document}